\documentclass{article}

\usepackage[preprint]{neurips_2026}

\usepackage[utf8]{inputenc} 
\usepackage[T1]{fontenc}    
\usepackage{hyperref}       
\usepackage{url}            
\usepackage{booktabs}       
\usepackage{multirow}       
\usepackage{amsfonts}       
\usepackage{nicefrac}       
\usepackage{microtype}      
\usepackage{xcolor}         
\usepackage{amsmath}
\usepackage{amssymb}
\usepackage{amsthm}
\usepackage{graphicx}
\usepackage{subcaption} 
\usepackage{caption}    
\usepackage{bbm}
\usepackage{soul}
\usepackage{xcolor}
\usepackage{float} 

\newtheorem{lemma}{Lemma}
\newtheorem{proposition}{Proposition}
\newtheorem{corollary}{Corollary}

\usepackage[most]{tcolorbox}

\definecolor{takeawaybg}{HTML}{C6D3E7}

\newtcolorbox{takeawaybox}{
  colback=takeawaybg,
  colframe=takeawaybg,
  boxrule=0pt,
  arc=2pt,
  left=8pt,
  right=8pt,
  top=6pt,
  bottom=6pt
}

\title{LpWM: A Case for Sparse Representations in World Models}

\author{
  Yilun Kuang\thanks{Correspondence to: Yilun Kuang <yilun.kuang@nyu.edu>, Yann LeCun <yann.lecun@nyu.edu>. \noindent\textbf{Code:} \url{https://github.com/YilunKuang/lpworldmodel}.} \\
  NYU, AMI Labs\\
  \And
  Yash Dagade \\
  Duke University \\
  \And
  Quentin Le Lidec \\
  NYU, AMI Labs \\
  \And
  Lucas Maes \\
  Mila, AMI Labs \\
  \AND
  Randall Balestriero \\
  Brown University, AMI Labs \\
  \And
  Yann LeCun \\
  NYU, AMI Labs \\
}

\begin{document}

\maketitle

\begin{abstract}
Joint-embedding predictive architectures (JEPAs) learn latent dynamics for planning and avoid representation collapse by matching features to maximum-entropy distributions such as isotropic Gaussians, yielding dense representations. However, it is unclear whether dense representations are the most favorable geometry for modeling dynamics. In this work, we ask whether a different geometry—sparse representations—can make action-conditioned latent dynamics easier to model, and what dynamical structure emerges from such representations. We first show that nonlinear Lipschitz dynamics can be approximated arbitrarily well by action-conditioned linear dynamics in a sufficiently high-dimensional one-hot latent space, with rollout error vanishing as the dimension grows. This motivates distributed sparse representations as a practical relaxation of one-hot sparsity. We introduce LpWorldModel (LpWM), a JEPA model regularized with Rectified Distribution Matching Regularization (RDMReg) to match encoder features to a Rectified Generalized Gaussian distribution, yielding non-negative sparse codes. Empirically, sparsity lowers the predictor complexity required for successful planning: on PushT, sparse LpWM outperforms dense LeWM by up to 57\% in planning success at intermediate predictor capacities. This advantage also extends beyond Gaussian distribution matching, with LpWM outperforming dense VICReg representations across multiple predictor families. We further find that the learned sparse representations are mode-factored, with support encoding discrete dynamical regimes and feature magnitudes capturing continuous within-regime state. Together, these results suggest that sparse representations can reduce the predictor complexity required for control while revealing interpretable structure.
\end{abstract}

\section{Introduction}
\label{sec:intro}

Joint-embedding predictive architectures (JEPA) have emerged as a practical recipe for model-based control from pixels: an encoder maps observations to a latent space, a predictor rollouts the latent forward under actions, and planning is carried out by model-predictive control (MPC) in latent space \citep{lecun2022path}. In the case of end-to-end training, where the encoder and predictor are jointly trained, an anti-collapse loss is necessary to prevent the feature representation from degenerating to a constant or low-rank subspaces \citep{jing2021understanding}. 

Prior methods such as PLDM \citep{sobal2025learningrewardfreeofflinedata} use VCReg \citep{bardes2022vicregvarianceinvariancecovarianceregularizationselfsupervised}, while LeWM \citep{maes2026leworldmodelstableendtoendjointembedding} uses SIGReg \citep{balestriero2025lejepaprovablescalableselfsupervised} to prevent collapse through information maximization, yielding dense representations in which all latent coordinates are nonzero almost surely. While effective for preventing collapse, it remains unclear whether dense features are the most favorable geometry for modeling controlled dynamics.

In this work, we ask whether sparse representations provide a more favorable geometry for modeling action-conditioned dynamics. Our motivation comes from an idealized limit: for Lipschitz controlled dynamics on a compact state space, sufficiently high-dimensional one-hot representations can approximate the dynamics with action-conditioned linear latent dynamics (Proposition~\ref{prop:linearapproxofnonlinear}), with rollout error vanishing as the dimension grows (Corollary~\ref{coro:sparsityinhighd}). While exact one-hot codes are impractical, this suggests distributed sparse representations, in which a nontrivial fraction of latent coordinates are exactly zero, as a learnable relaxation that may simplify the dynamics.

We introduce LpWorldModel (\textbf{LpWM}), a JEPA world model regularized with RDMReg \citep{kuang2026rectifiedlpjepajointembeddingpredictive} to learn non-negative, sparse codes. RDMReg matches rectified feature distributions to Rectified Generalized Gaussian targets, whose underlying Generalized Gaussian distributions are maximum-entropy under expected $\ell_p$-norm constraints with $p\leq1$ yields sparse targets.

Empirically, sparsity lowers the predictor complexity required for successful planning: predictors that fail over dense codes succeed over sparse ones. The resulting representations are also mode-factored: their support encodes discrete dynamics regimes, while feature magnitudes encode continuous within-regime state. Together, these results suggest that sparse latent geometry can make dynamics both easier to model and more interpretable.
\begin{figure*}[t!]
    \centering
    \begin{subfigure}[t]{0.512\linewidth}
        \centering
        \includegraphics[width=\linewidth]{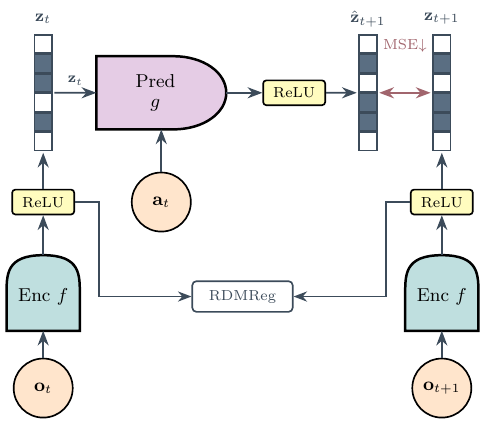}
        \captionsetup{justification=centering}
        \caption{LpWM Schematic Diagram}
        \label{fig:my_figure}
    \end{subfigure}
    \hfill
    \begin{subfigure}[t]{0.448\linewidth}
        \centering
        \includegraphics[width=\linewidth]{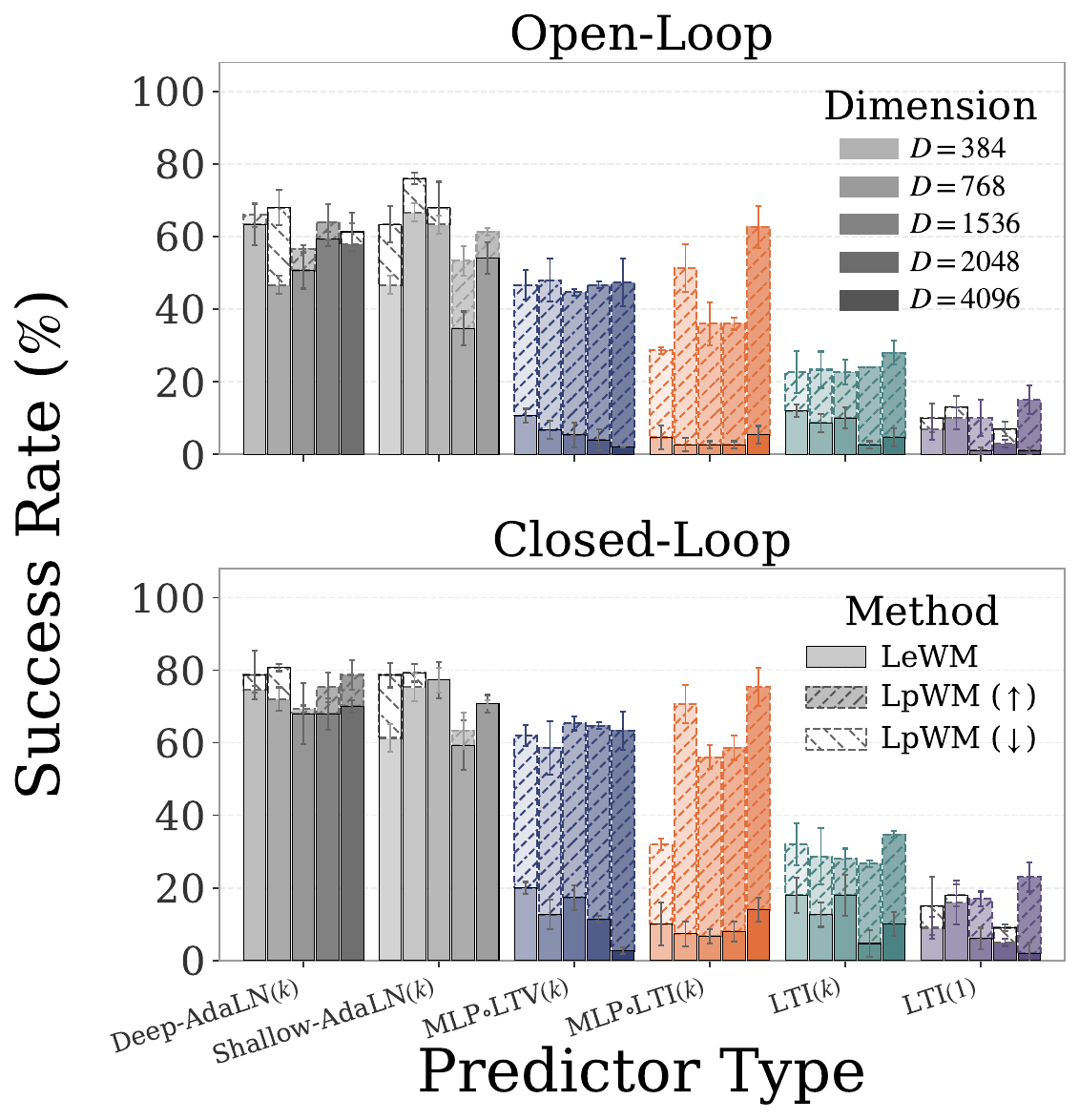}
        \captionsetup{justification=centering}
        \caption{PushT Planning Performance.}
        \label{fig:sparsitylinearityfigsub22}
    \end{subfigure}
    \caption{\textbf{LpWorldModel: a JEPA model with sparse representations.} (a) An encoder $f$ maps observation $\mathbf{o}_t$ to non-negative, sparse latents $\mathbf{z}_{t}$ via a terminal (Rep)ReLU function. The predictor $g$ takes inputs $\mathbf{z}_{t}$ and actions $\mathbf{a}_{t}$ to predict $\hat{\mathbf{z}}_{t+1}$, compared against the encoded target $\mathbf{z}_{t+1}=f(\mathbf{o}_{t+1})$. RDMReg \citep{kuang2026rectifiedlpjepajointembeddingpredictive} matches per-timestep latent marginals to a Rectified Generalized Gaussian, preventing collapse and inducing sparsity. (b). Arrows indicate the success-rate difference between LpWM and LeWM: $\uparrow$ denotes higher performance for LpWM, while $\downarrow$ denotes lower performance. LpWM outperforms LeWM with more linear predictor variants (MLP$\circ$LTV$(k)$, MLP$\circ$LTI$(k)$, and LTI$(k)$. See full specifications in Table~\ref{tab:predictor-ladder}) by up to $57\%$ in success rate, while performing on par with LeWM using the high-capacity DiT predictors. This suggests that sparse representations induce a latent space with simpler dynamics to learn compared to their dense counterparts.}
    \label{fig:sparsitylinearityfigaa}
\end{figure*}

\paragraph{Contributions.}
\begin{itemize}
\item \textbf{LpWorldModel}. We introduce LpWM, a JEPA world model that uses RDMReg to learn non-negative sparse latent representations for latent dynamics modeling and planning.
\item \textbf{Sparse representations for latent linearization}. We formalize the intuition that sufficiently high-dimensional one-hot representations can approximate nonlinear controlled dynamics with linear latent transitions, motivating distributed sparse codes as a practical relaxation.
\item \textbf{Lower-complexity latent dynamics}. We show empirically that sparsity reduces the predictor capacity needed for successful planning: while Wall dynamics are already solved by linear predictors, on the harder PushT environment sparse LpWM substantially outperforms dense LeWM with intermediate-capacity predictors, by up to $57\%$ in success rate.
\item \textbf{Mode-factored latent dynamics}. We find that sparse representations organize dynamics into interpretable factors, with support encoding discrete dynamical regimes and feature magnitudes capturing continuous within-regime state.
\end{itemize}

\section{Method}
\label{sec:method}

\subsection{LpWM: JEPA with Rectified Distribution Matching Regularization (RDMReg)}
\label{sec:method-rdm}

Let $\mathbf{z}_{t+1}=f_{\boldsymbol{\theta}}(\mathbf{x}_{t+1})\in\mathbb{R}^{d}$ be the encoder representation of the observation $\mathbf{x}_{t+1}$ and $\hat{\mathbf{z}}_{t+1}=g_{\boldsymbol{\phi}}(\mathbf{z}_{t},\mathbf{a}_{t})$ be the predictor output with action inputs $\mathbf{a}_{t}\in\mathbb{R}^{d_a}$. The action-conditioned JEPA uses the following loss function to jointly train the encoder $f_{\boldsymbol{\theta}}$ and the predictor $g_{\boldsymbol{\phi}}$:
\begin{align}
\min_{\boldsymbol{\theta},\boldsymbol{\phi}}\quad\|\hat{\mathbf{z}}_{t+1}-\mathbf{z}_{t+1}\|_2 + \lambda_{\operatorname{RDMReg}}\cdot\mathcal{R}(\mathbf{z}_{t+1})
\end{align}
where $\lambda_{\operatorname{RDMReg}}$ is a hyperparameter and $\mathcal{R}(\cdot)$ is an anti-collapse regularizer. We focus on the setting where $\mathcal{R}(\cdot)$ is the Rectified Distribution Matching Regularization (RDMReg) \citep{kuang2026rectifiedlpjepajointembeddingpredictive}:
\begin{align}
    \mathcal{R}(\mathbf{z})=\mathbb{E}_{\mathbf{c}\sim\operatorname{Unif}(\mathbb{S}^{d-1}_{\ell_2})}\big[\mathcal{L}(\mathbb{P}_{\mathbf{c}^\top\mathbf{z}}\|\mathbb{P}_{\mathbf{c}^\top\mathbf{y}})\big]
\end{align}
where the random projection vector $\mathbf{c}\in\mathbb{R}^{d}$ is sampled uniformly from the $\ell_2$ unit sphere $\mathbb{S}_{\ell_2}^{d-1}:=\{\mathbf{x}\in\mathbb{R}^d$ $\vert$ $\|\mathbf{x}\|_2=1\}$, the target random vector $\mathbf{y}\sim\prod_{i=1}^{d}\operatorname{ReLU}(\mathcal{GN}_{p}(\mu,\sigma))$ follows the Rectified Generalized Gaussian (RGG) distribution, $\mathbb{P}_{x}$ denotes the probability distribution for the random variable $x$, and $\mathcal{L}(\cdot\|\cdot)$ is a loss function that compute discrepancies between probability distributions. In our case, we follow \citet{kuang2026rectifiedlpjepajointembeddingpredictive} and set $\mathcal{L}(\cdot\|\cdot)$ to be the $2$-Wasserstein distance. By default we choose $\mu=0$, $\sigma=\sqrt{1/2}$, and $p=1$ and hence our target distribution reduces to the Rectified Laplace distribution \citep{kuang2026rectifiedlpjepajointembeddingpredictive}. Note that when $\mu=0$, $\sigma=1$, $p=2$, and in the absence of $\operatorname{ReLU}$, we recover dense isotropic Gaussian used in LeWM \citep{maes2026leworldmodelstableendtoendjointembedding}.

\subsection{Architecture and Rectification}
\label{sec:method-arch}
Following \citet{maes2026leworldmodelstableendtoendjointembedding}, our encoder $f_{\boldsymbol\theta}$ is a ViT \citep{dosovitskiy2021imageworth16x16words} with the output CLS token passed through a three-layer MLP projector to produce the embedding $\mathbf{z}\in\mathbb{R}^{d}$. The
predictor $g_{\boldsymbol\phi}$ is a Transformer with AdaLN-zero action conditioning \citep{peebles2023scalablediffusionmodelstransformers} and a final three-layer MLP projector that predicts the next embedding $\hat{\mathbf{z}}_{t+1}$. 

To induce exact zeros, both encoder and predictor MLP projector outputs pass through a reparameterized ReLU \citep{wang2024nonnegativecontrastivelearning,kuang2026rectifiedlpjepajointembeddingpredictive}: $\operatorname{RepReLU}(x)=\operatorname{sg}(\operatorname{ReLU}(x))+\operatorname{GeLU}(x)-\operatorname{sg}(\operatorname{GeLU}(x))$, where $\operatorname{sg}(\cdot)$ denotes the stop-gradient operator. In the forward pass, $\operatorname{RepReLU}(x)=\operatorname{ReLU}(x)$, yielding exact zeros, while in the backward pass gradients flow through $\operatorname{GeLU}(x)$ to mitigate the dying-ReLU pathology \citep{hendrycks2023gaussianerrorlinearunits}. Importantly, LpWM does not rely on stop-gradient for collapse prevention: standard $\operatorname{ReLU}$ with RDMReg trains without collapse and yields exactly sparse representations \citep{kuang2026rectifiedlpjepajointembeddingpredictive}. We therefore use $\operatorname{RepReLU}$ as an optimization safeguard against dead neurons rather than as an essential component of the method. See Figure~\ref{fig:my_figure} for LpWM architecture.

\subsection{Planning with the Learned World Model}
\label{sec:method-planning}
At test time the world model serves as the dynamics model for model-predictive control (MPC): we encode the current and goal observation images to $\mathbf{z}_0$ and $\mathbf{z}_g$, and search for an action sequence whose predicted latent rollout minimizes the goal cost $\mathcal{C}=\|\hat{\mathbf{z}}_T-\mathbf{z}_g\|_2$ using the Cross-Entropy Method (CEM) \citep{rubinstein1999cross,zhou2025dinowmworldmodelspretrained}. See additional experimental details in Appendix~\ref{app:experimental-details}.

\begin{figure*}[t!]
    \centering
    \begin{subfigure}[t]{0.32\linewidth}
        \centering
        \includegraphics[width=\linewidth]{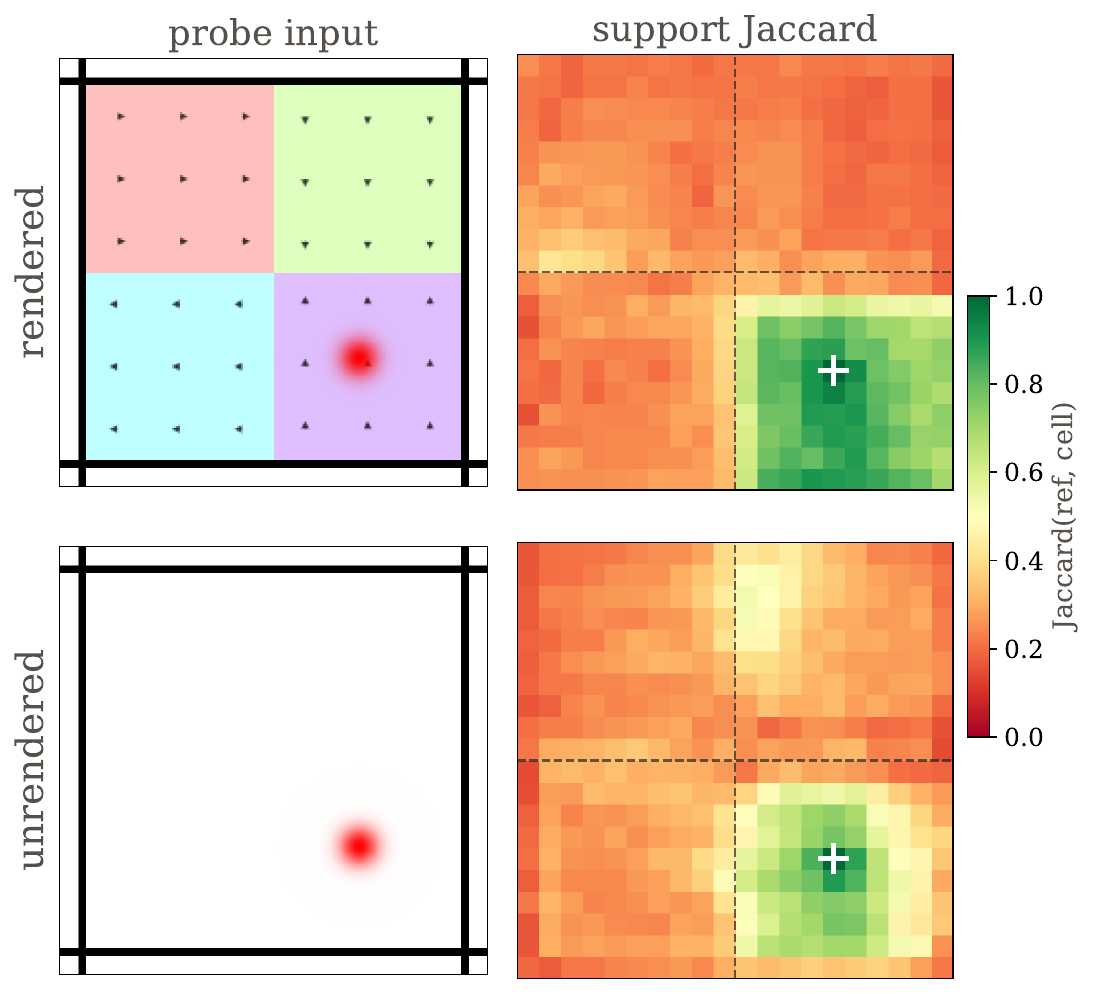}
        \captionsetup{justification=centering}
        \caption{Support Jaccard Heatmap.}
        \label{fig:aaa1}
    \end{subfigure}
    \hfill
    \begin{subfigure}[t]{0.32\linewidth}
        \centering
        \includegraphics[width=\linewidth]{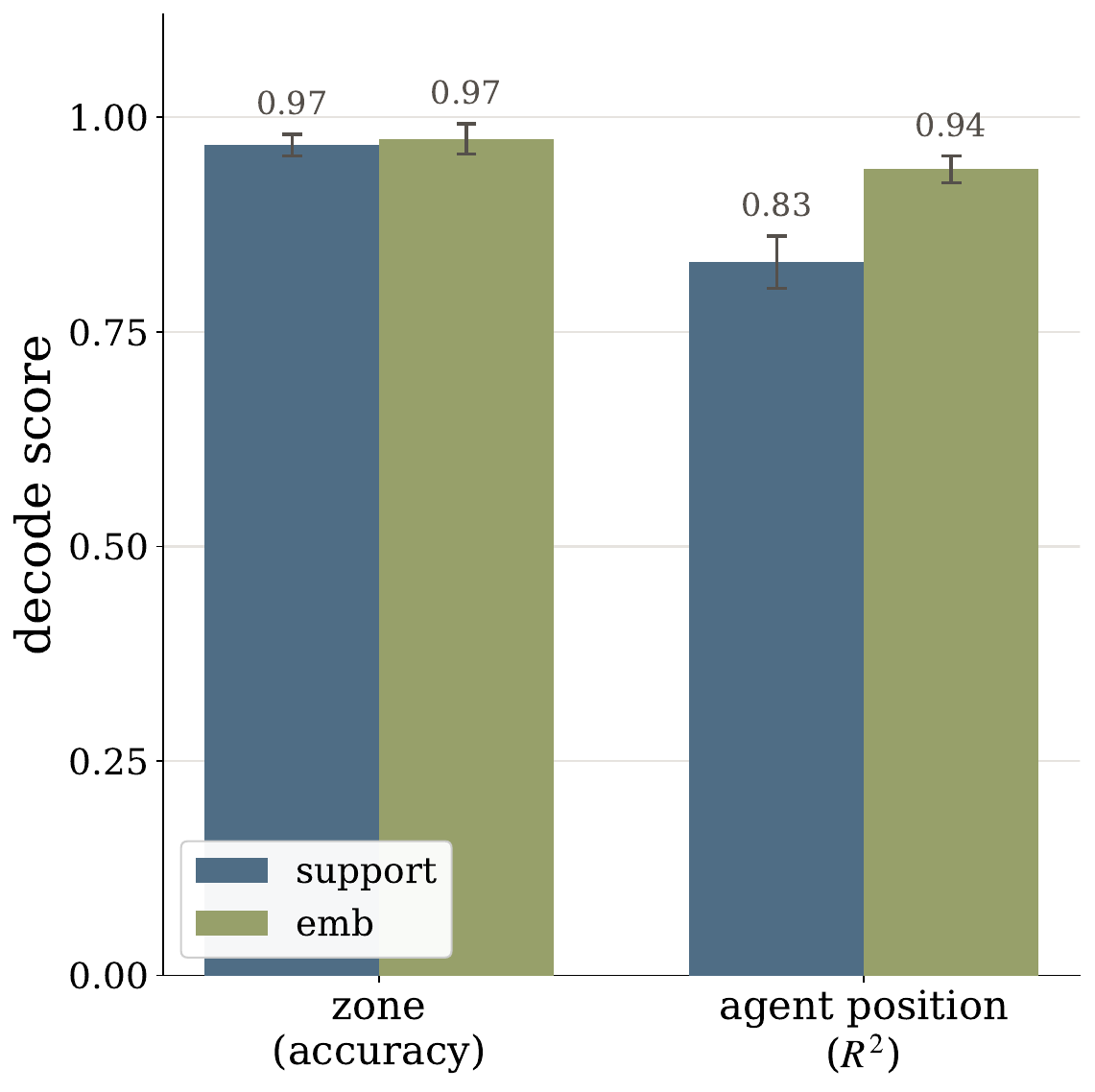}
        \captionsetup{justification=centering}
        \caption{State Decoding.}
        \label{fig:aaa2}
    \end{subfigure}
    \hfill
    \begin{subfigure}[t]{0.32\linewidth}
        \centering
        \includegraphics[width=\linewidth]{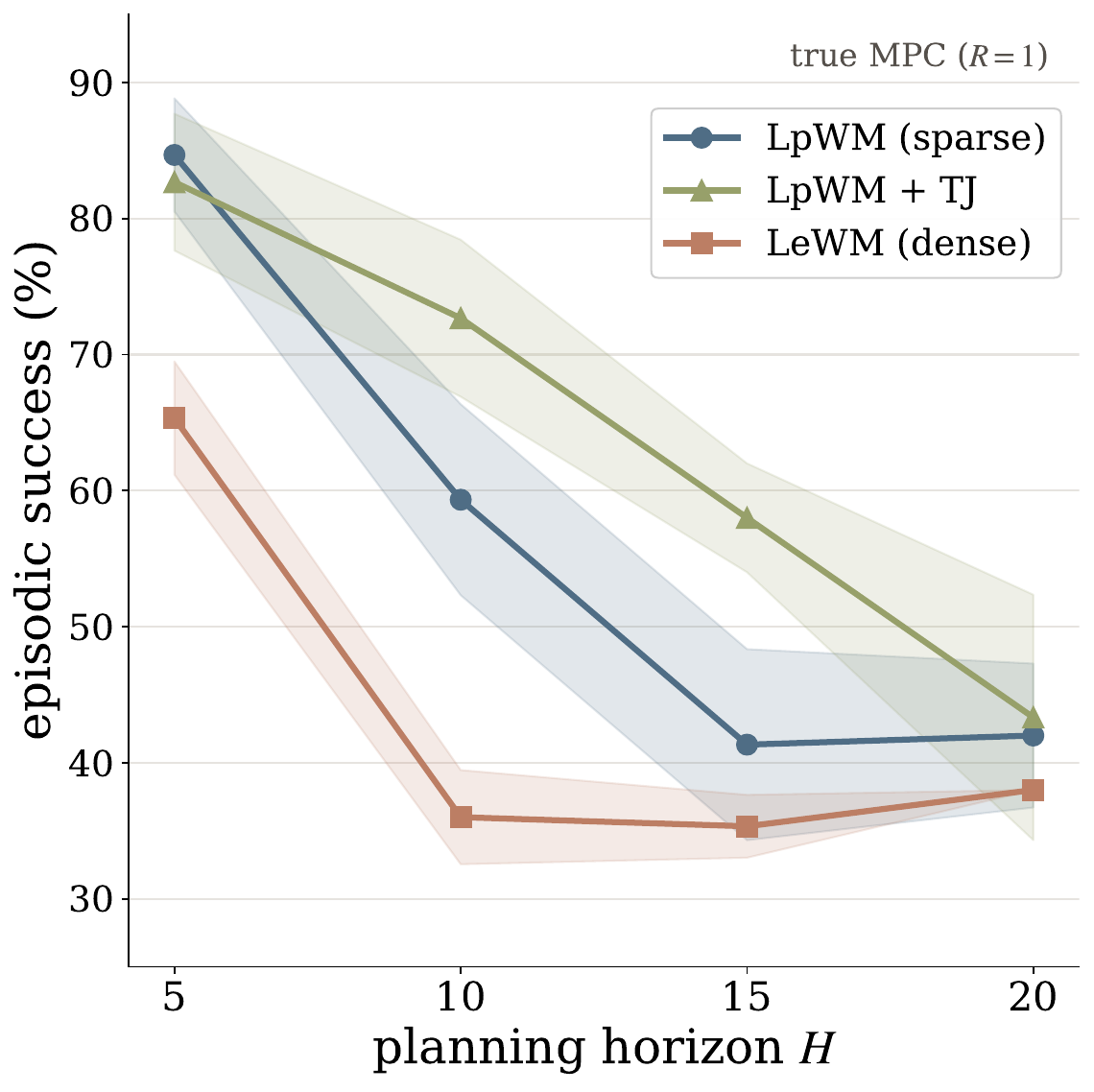}
        \captionsetup{justification=centering}
        \caption{Planning Evaluation.}
        \label{fig:aaa3}
    \end{subfigure}
    \caption{\textbf{The sparse support recovers the piecewise structure of the dynamics.} \textbf{(a)} Support Jaccard heatmap over a $20\times20$ grid: the Jaccard index of each cell's support with a fixed reference cell (white $+$) is high within the reference zone and drops across zone boundaries (dashed), for a model trained with rendered zones (top) and one with no visual cues (bottom). \textbf{(b)} The binary support decodes the ground-truth zone identity as accurately as the full embedding, while agent position is decoded better from the magnitudes than from the support. \textbf{(c)} Planning success vs.\ horizon (closed-loop, 3-seed mean$\pm$std). Goal locations of the dot agent are sampled randomly. Both sparse variants beat dense LeWM at every horizon.}
    \label{fig:aaa4}
\end{figure*}

\section{Does Sparsity Simplify Latent Dynamics?}\label{sec:doessparsitysimplify}

In the following sections, we ask whether a sparse code makes the action-conditioned latent dynamics simpler by requiring a less expressive predictor than a dense code. We first show that a sufficiently high-dimensional sparse code renders the latent dynamics exactly linear under appropriate assumptions (Proposition~\ref{prop:linearapproxofnonlinear}). We then relax this to a learnable distributed sparse code and test, along a predictor-complexity ladder (Table~\ref{tab:predictor-ladder}), whether a simpler predictor plans better under sparsity.

\subsection{Linearization of Nonlinear Dynamics with Sparse Representations}

In Proposition~\ref{prop:linearapproxofnonlinear}, we show that compact controlled systems with Lipschitz dynamics admit finite one-hot sparse encodings with exactly linear latent dynamics. This is closely related to classical symbolic abstractions based on state-space quantization \citep{pola2008approximately}, which we reinterpret as one-hot sparse latent codes.

\begin{proposition}[Linear Approximation of Nonlinear Dynamics with Sparse Representations]\label{prop:linearapproxofnonlinear}
Let $\mathcal{X}\subset\mathbb{R}^d$ be a compact state space and let
$\mathcal{A}\subset\mathbb{R}^p$ be a compact continuous action space.
Consider a deterministic discrete-time controlled dynamical system $\mathbf{x}_{t+1} = f(\mathbf{x}_t,\mathbf{a}_t)$ where $f:\mathcal{X}\times\mathcal{A}\rightarrow\mathcal{X}.$ Assume that $f$ is uniformly Lipschitz in the state variable, i.e. there exists a
finite constant $L\ge 0$ such that $\left\|
f(\mathbf{x},\mathbf{a})
-
f(\mathbf{y},\mathbf{a})
\right\|
\le
L\|\mathbf{x}-\mathbf{y}\|
$ for every $\mathbf{x},\mathbf{y}\in\mathcal{X}$ and
$\mathbf{a}\in\mathcal{A}$. Then, for every $\varepsilon>0$, there exist 1) a finite latent dimension $N$; 2) a one-hot sparse encoder $E_\varepsilon:\mathcal{X}\rightarrow\mathbb{R}^N$; 3) a decoder $D_\varepsilon:\{\mathbf{e}_1,\ldots,\mathbf{e}_N\}\rightarrow\mathcal{X}$; and 4) an action-conditioned family of matrices $P_\varepsilon : \mathcal{A}\rightarrow\{0,1\}^{N\times N},$ $\mathbf{a} \mapsto \mathbf{P}_\varepsilon(\mathbf{a}),$ such that the latent dynamics $\mathbf{z}_{t+1}=\mathbf{P}_\varepsilon(\mathbf{a}_t)\mathbf{z}_t$ are exactly linear in $\mathbf{z}_t$, and the decoded one-step prediction satisfies
\begin{align}
\sup_{\mathbf{x}_t\in\mathcal{X},\,\mathbf{a}_t\in\mathcal{A}}
\left\|
f(\mathbf{x}_t,\mathbf{a}_t)
-
D_\varepsilon\!\left(
\mathbf{P}_\varepsilon(\mathbf{a}_t)
E_\varepsilon(\mathbf{x}_t)
\right)
\right\|
\le
(L+1)\varepsilon.
\end{align}
\end{proposition}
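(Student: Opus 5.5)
The plan is a quantize-and-snap construction in the spirit of symbolic abstractions. By compactness, $\mathcal{X}$ admits a finite $\varepsilon$-net. Concretely, I would partition $\mathcal{X}$ into finitely many nonempty Borel cells $C_1,\dots,C_N$ of diameter at most $\varepsilon$, for instance by intersecting $\mathcal{X}$ with a grid of cubes of side $\varepsilon/\sqrt{d}$ and discarding the empty ones. In each cell I fix a representative $\mathbf{c}_i\in C_i$.

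The components are then defined as follows.
\begin{itemize}
\item The encoder is $E_\varepsilon(\mathbf{x})=\mathbf{e}_i$ whenever $\mathbf{x}\in C_i$. This is well defined because the cells partition $\mathcal{X}$, and it is one-hot by construction.
\item The decoder is $D_\varepsilon(\mathbf{e}_i)=\mathbf{c}_i$.
\item For each action $\mathbf{a}$, let $\sigma_{\mathbf{a}}(i)$ be the index $j$ with $f(\mathbf{c}_i,\mathbf{a})\in C_j$. Such a $j$ exists and is unique because $f$ maps into $\mathcal{X}$.
\item The matrix $\mathbf{P}_\varepsilon(\mathbf{a})\in\{0,1\}^{N\times N}$ has entries $[\mathbf{P}_\varepsilon(\mathbf{a})]_{ji}=\mathbb{1}[j=\sigma_{\mathbf{a}}(i)]$.
\end{itemize}
Each column of $\mathbf{P}_\varepsilon(\mathbf{a})$ has exactly one $1$, so $\mathbf{P}_\varepsilon(\mathbf{a})\mathbf{e}_i=\mathbf{e}_{\sigma_{\mathbf{a}}(i)}$. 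Hence the latent update $\mathbf{z}_{t+1}=\mathbf{P}_\varepsilon(\mathbf{a}_t)\mathbf{z}_t$ is linear in $\mathbf{z}_t$ and maps one-hot vectors to one-hot vectors. No regularity in $\mathbf{a}$ is needed, because the bound is taken pointwise in $\mathbf{a}$ before the supremum.

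For the error bound, fix $\mathbf{x}\in C_i$ and $\mathbf{a}\in\mathcal{A}$. The decoded prediction is $D_\varepsilon(\mathbf{e}_{\sigma_{\mathbf{a}}(i)})=\mathbf{c}_{\sigma_{\mathbf{a}}(i)}$. By the triangle inequality,
\begin{align}
\|f(\mathbf{x},\mathbf{a})-\mathbf{c}_{\sigma_{\mathbf{a}}(i)}\|
\le \|f(\mathbf{x},\mathbf{a})-f(\mathbf{c}_i,\mathbf{a})\| + \|f(\mathbf{c}_i,\mathbf{a})-\mathbf{c}_{\sigma_{\mathbf{a}}(i)}\|.
\end{align}
The first term is at most $L\|\mathbf{x}-\mathbf{c}_i\|\le L\varepsilon$ by the uniform Lipschitz assumption and $\operatorname{diam} C_i\le\varepsilon$. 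The second term is at most $\varepsilon$, because $f(\mathbf{c}_i,\mathbf{a})$ and $\mathbf{c}_{\sigma_{\mathbf{a}}(i)}$ lie in the same cell. The total is $(L+1)\varepsilon$, uniformly in $\mathbf{x}$ and $\mathbf{a}$, so the supremum obeys the same bound.

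The main point needing care is the partition step: the encoder must be a genuine function, so that the "snapping" of $f(\mathbf{c}_i,\mathbf{a})$ lands in a cell of controlled diameter. A plain $\varepsilon$-net with overlapping balls would make $\sigma_{\mathbf{a}}$ ambiguous. I would handle this with the grid-of-cubes partition, using half-open cubes so that every point lies in exactly one cube, and choosing the side so that the Euclidean diameter is at most $\varepsilon$. If the norm in the statement is not Euclidean, I would rescale the side using norm equivalence on $\mathbb{R}^d$. Compactness guarantees that only finitely many cubes meet $\mathcal{X}$, which gives a finite $N$. The rest is routine.
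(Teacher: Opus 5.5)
Your proof is correct and follows essentially the same quantize-and-snap argument as the paper: a one-hot encoding of a quantization index, a transition sending index $i$ to the cell containing $f(\mathbf{c}_i,\mathbf{a})$, and the triangle-inequality bound $L\varepsilon+\varepsilon$. The only difference is cosmetic. The paper takes a finite subcover of $\mathcal{X}$ by $\varepsilon$-balls and fixes any deterministic selection $Q_\varepsilon$ with $\|\mathbf{x}-\mathbf{c}_{Q_\varepsilon(\mathbf{x})}\|\le\varepsilon$, so overlapping balls cause no ambiguity despite your concern; you instead build an explicit grid partition into cells of diameter at most $\varepsilon$.
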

\begin{proof}
    See Appendix \ref{proof:linapproxofnonlinearwithsparse}
\end{proof}
In Corollary~\ref{coro:sparsityinhighd}, we consider one example of a compact domain and show that the rollout error with respect to the ground-truth dynamics vanishes as the encoding dimension grows.

\begin{corollary}[Sparsity in High-Dimensions]\label{coro:sparsityinhighd}
Let the state space be $\mathcal{X}=[0,1]^d$ and consider dividing each coordinate into $n$ equal intervals, producing $N=n^d$ grid cells. We represent each cell by its center with the covering radius $\varepsilon_N=\frac{\sqrt{d}}{2}N^{-1/d}$. Hence the one-step prediction error satisfies
\begin{align}
\sup_{\mathbf{x},\mathbf{a}}
\left\|
f(\mathbf{x},\mathbf{a})
-
D_{\varepsilon_N}\!\left(
\mathbf{P}_{\varepsilon_N}(\mathbf{a})E_{\varepsilon_N}(\mathbf{x})
\right)
\right\|
\le
\frac{\sqrt{d}}{2}(L+1)N^{-1/d}=
O(N^{-1/d}).
\end{align}
For any fixed rollout horizon $H$, $\|\mathbf{x}_H-\hat{\mathbf{x}}_H\|$ $\le\frac{\sqrt{d}}{2}N^{-1/d}\sum_{k=0}^{H}L^k$ and $\lim_{N\rightarrow\infty}$ $\|\mathbf{x}_H-\hat{\mathbf{x}}_H\|=0$.
\end{corollary}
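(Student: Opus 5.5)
The plan is to instantiate the construction of Proposition~\ref{prop:linearapproxofnonlinear} on the uniform grid and then propagate the one-step error through the rollout by induction. The key fact is that each cell of the grid partition of $[0,1]^d$ into $N=n^d$ cubes has side $1/n=N^{-1/d}$. So every point lies within half the cube diagonal, $\frac{\sqrt d}{2}N^{-1/d}=\varepsilon_N$, of its cell center. Thus the grid centers form an $\varepsilon_N$-cover. I take $E_{\varepsilon_N}(\mathbf{x})=\mathbf{e}_{i(\mathbf{x})}$, where $i(\mathbf{x})$ is the cell containing $\mathbf{x}$, with ties on boundaries broken by a fixed rule so that the cells form a partition. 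The decoder is $D_{\varepsilon_N}(\mathbf{e}_i)=\mathbf{c}_i$, the center of cell $i$. The matrix $\mathbf{P}_{\varepsilon_N}(\mathbf{a})$ has column $i$ equal to $\mathbf{e}_{j}$ with $j=i(f(\mathbf{c}_i,\mathbf{a}))$.

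The one-step bound then comes directly from Proposition~\ref{prop:linearapproxofnonlinear} with $\varepsilon=\varepsilon_N$; the direct check goes as follows. Write $\mathbf{c}=D(E(\mathbf{x}))$ and $\mathbf{c}'=D(\mathbf{P}(\mathbf{a})E(\mathbf{x}))$, which is the center of the cell containing $f(\mathbf{c},\mathbf{a})$. By the triangle inequality,
\begin{align}
\|f(\mathbf{x},\mathbf{a})-\mathbf{c}'\|\le\|f(\mathbf{x},\mathbf{a})-f(\mathbf{c},\mathbf{a})\|+\|f(\mathbf{c},\mathbf{a})-\mathbf{c}'\|\le L\varepsilon_N+\varepsilon_N.
\end{align}
This gives $\frac{\sqrt d}{2}(L+1)N^{-1/d}=O(N^{-1/d})$ for fixed $d$ and $L$.

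For the rollout, I need to fix what $\hat{\mathbf{x}}_k$ means. The natural choice is the decoded latent rollout $\hat{\mathbf{x}}_k=D(\mathbf{z}_k)$, where $\mathbf{z}_0=E(\mathbf{x}_0)$ and $\mathbf{z}_{k+1}=\mathbf{P}(\mathbf{a}_k)\mathbf{z}_k$. Because $\mathbf{z}_k$ always remains a basis vector, it is the one-hot code of $\hat{\mathbf{x}}_k$, so $E(\hat{\mathbf{x}}_k)=\mathbf{z}_k$. Let $e_k=\|\mathbf{x}_k-\hat{\mathbf{x}}_k\|$; then $e_0\le\varepsilon_N$. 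For the induction step,
\begin{align}
e_{k+1}\le\|f(\mathbf{x}_k,\mathbf{a}_k)-f(\hat{\mathbf{x}}_k,\mathbf{a}_k)\|+\|f(\hat{\mathbf{x}}_k,\mathbf{a}_k)-D(\mathbf{P}(\mathbf{a}_k)E(\hat{\mathbf{x}}_k))\|\le Le_k+\varepsilon_N.
\end{align}
In the second term, $\hat{\mathbf{x}}_k$ is itself a cell center, so the first piece of the one-step bound vanishes and only the quantization error $\varepsilon_N$ remains. Unrolling the recursion gives $e_H\le\varepsilon_N\sum_{k=0}^{H}L^k$, which is the stated bound. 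For fixed $H$ the geometric sum is a finite constant, so $e_H\to0$ as $N\to\infty$.

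The main obstacle is this rollout step. If one naively applies the full one-step bound $(L+1)\varepsilon_N$ at every step, the recursion becomes $e_{k+1}\le Le_k+(L+1)\varepsilon_N$. That yields a worse constant than the stated $\sum_{k=0}^H L^k$, although the vanishing conclusion still holds. To get the sharper constant I would use the observation above: each step adds only the rounding error $\varepsilon_N$, because the predicted state is always exactly a cell center. If that step caused trouble, I would fall back on the cruder bound, which still proves $\lim_{N\to\infty}e_H=0$.
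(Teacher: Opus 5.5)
Your proposal is correct and follows essentially the same route as the paper. It computes the grid covering radius $\varepsilon_N=\frac{\sqrt d}{2}N^{-1/d}$, applies the construction of Proposition~\ref{prop:linearapproxofnonlinear} for the one-step bound, and unrolls $e_{k+1}\le Le_k+\varepsilon_N$ with $e_0\le\varepsilon_N$ as in Lemma~\ref{lemma:finiterollout}. The only cosmetic difference is how you get the per-step rounding error: you use $E(\hat{\mathbf{x}}_k)=\mathbf{z}_k$, which holds because grid centers lie in the interior of their own cells, whereas the paper reads $\hat{\mathbf{x}}_{t+1}=q_\varepsilon(f(\hat{\mathbf{x}}_t,\mathbf{a}_t))$ directly off the column definition of $\mathbf{P}_\varepsilon$ and so does not need that property.
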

\begin{proof}
    See Appendix \ref{proof:sparsityinhighdimproof}.
\end{proof}
We note that the approximation error of one-step prediction under an exact one-hot sparse encoding decreases as $O(N^{-1/d})$, and therefore suffers from the curse of dimensionality. Nonetheless, Corollary~\ref{coro:sparsityinhighd} illustrates a useful trade-off: higher-dimensional sparse representations can simplify the latent transition map, motivating distributed sparse codes as a practical relaxation that may reduce the complexity of the learned dynamics model. In the following experiments, we test whether this simplification enables successful planning with less expressive predictors. In the special case where the learned latent dynamics reduce further to a standard linear controlled system, classical control methods such as LQR become directly applicable under standard assumptions \citep{kalman1960contributions}.

\subsection{Experiment Designs}

In practice, exact high-dimensional one-hot encodings are infeasible to optimize, and we need exponentially large feature dimensions to linearize the latent dynamics. LpWM instead learns distributed sparse codes whose sparsity is controlled by the target-distribution parameter $\mu$. Although these codes maybe not linearize the dynamics exactly, we hypothesize that they make the latent dynamics simpler to model than dense representations, so that lower-capacity predictors can suffice.

To test this, we consider a list of predictor options in Table~\ref{tab:predictor-ladder}, ranging from nonlinear DiT predictors with AdaLN action conditioning \citep{peebles2023scalablediffusionmodelstransformers} to linear time-invariant (LTI) state-space models. We train sparse LpWM and dense LeWM with the same encoder architecture described in Section~\ref{sec:method-arch}, but replace the standard DiT predictor with candidates in Table~\ref{tab:predictor-ladder} across latent dimensions $D\in\{384,768,1536,2048,4096\}$ with their corresponding parameter counts in Table~\ref{tab:predictor-params}.

Additionally, we test a variant of LeWM in which SIGReg \citep{balestriero2025lejepaprovablescalableselfsupervised} is replaced with VICReg \citep{bardes2022vicregvarianceinvariancecovarianceregularizationselfsupervised}, yielding another JEPA model with dense representations. For simplicity, we use VICReg to denote this LeWM variant in Figure~\ref{fig:sparsitylinearityfigsub2vicreg}.

Since our experiment design involves comparing models under width scaling, we use maximum update parameterization ($\mu$P) to stabilize training across widths, with details in Appendix~\ref{app:mupforjepa}. Experiments are conducted on Wall and PushT environments (Appendix~\ref{app:envexplained}).

\begin{table}[t]
  \centering\small
  \caption{\textbf{Predictors with Different Levels of Complexity.} $\hat{\mathbf{z}}_{t+1}$ is the predicted latent, $\mathbf{z}_{t-i}$ past latents, and $\mathbf{a}_t$ the action. AdaLN predictors are Transformer-based models with Adaptive Layer Normalization for action conditioning, i.e. DiT architecture \citep{peebles2023scalablediffusionmodelstransformers}. The Deep-AdaLN$(k)$ predictor is the one used in LeWM \citep{maes2026leworldmodelstableendtoendjointembedding}. LTI denotes a linear time-invariant predictor with fixed operators $\mathbf{A}_i,\mathbf{B}$, while LTV denotes a linear time-varying predictor with state-dependent operators $\mathbf{A}_i(\mathbf{z}_t),\mathbf{B}(\mathbf{z}_t)$. We use MLP$\circ$ to denote one extra nonlinear transform. The history $k$ denotes context length, with $1$ the single-frame case. The output link is $\sigma=\mathrm{identity}$ for dense LeWM and $\sigma=\mathrm{RepReLU}$ for sparse LpWM. See Appendix~\ref{app:preddesignsec} for details.}
  \label{tab:predictor-ladder}
  \begin{tabular}{lccl}
    \toprule
    Predictor & Layers & History ($k$) & Functional form \\
    \midrule
    \textbf{Deep-AdaLN}$(k)$    & $6$ & $3$ & $\hat{\mathbf{z}}_{t+1}=\sigma\!\left(\mathrm{AdaLN}^{(6)}(\mathbf{z}_{t-k+1:t},\,\mathbf{a}_t)\right)$ \\
    \textbf{Shallow-AdaLN}$(k)$ & $1$ & $3$ & $\hat{\mathbf{z}}_{t+1}=\sigma\!\left(\mathrm{AdaLN}^{(1)}(\mathbf{z}_{t-k+1:t},\,\mathbf{a}_t)\right)$ \\
    \textbf{MLP$\circ$LTV}$(k)$ & $1$ & $3$ & $\hat{\mathbf{z}}_{t+1}=\sigma\!\left(\mathbf{W}\,\mathrm{ReLU}(\textstyle\sum_{i=0}^{k-1}\mathbf{A}_i(\mathbf{z}_t)\,\mathbf{z}_{t-i}+\mathbf{B}(\mathbf{z}_t)\,\mathbf{a}_t)\right)$ \\
    \textbf{MLP$\circ$LTI}$(k)$ & $1$ & $3$ & $\hat{\mathbf{z}}_{t+1}=\sigma\!\left(\mathbf{W}\,\mathrm{ReLU}(\textstyle\sum_{i=0}^{k-1}\mathbf{A}_i\,\mathbf{z}_{t-i}+\mathbf{B}\,\mathbf{a}_t)\right)$ \\
    \textbf{LTI}$(k)$           & $1$ & $3$ & $\hat{\mathbf{z}}_{t+1}=\sigma\!\left(\textstyle\sum_{i=0}^{k-1}\mathbf{A}_i\,\mathbf{z}_{t-i}+\mathbf{B}\,\mathbf{a}_t\right)$ \\
    \textbf{LTI}$(1)$           & $1$ & $1$ & $\hat{\mathbf{z}}_{t+1}=\sigma\!\left(\mathbf{A}\,\mathbf{z}_t+\mathbf{B}\,\mathbf{a}_t\right)$ \\
    \bottomrule
  \end{tabular}
\end{table}

\begin{table}[t]
  \centering 
  \caption{\textbf{Predictor parameter counts across latent dimensions $D$.}
  Counts are for the predictor alone for each
  $D$ used in our experiments. LTI, MLP$\circ$LTI, and MLP$\circ$LTV are attention-free and far smaller than the AdaLN Transformers.}
  \label{tab:predictor-params}
  \begin{tabular}{lrrrrr}
    \toprule
    Predictor & $D{=}384$ & $D{=}768$ & $D{=}1536$ & $D{=}2048$ & $D{=}4096$ \\
    \midrule
    \textbf{Deep-AdaLN}$(k)$    & $25.8$M & $62.2$M & $166.9$M & $260.2$M & $822.4$M \\
    \textbf{Shallow-AdaLN}$(k)$ & $5.6$M  & $13.0$M & $33.1$M  & $50.4$M  & $151.1$M \\
    \textbf{MLP$\circ$LTV}$(k)$ & $0.81$M & $3.10$M & $12.1$M  & $21.4$M  & $84.7$M \\
    \textbf{MLP$\circ$LTI}$(k)$ & $0.74$M & $2.95$M & $11.8$M  & $21.0$M  & $83.9$M \\
    \textbf{LTI}$(k)$           & $0.59$M & $2.36$M & $9.44$M  & $16.8$M  & $67.1$M \\
    \textbf{LTI}$(1)$           & $0.30$M & $1.18$M & $4.72$M  & $8.39$M  & $33.6$M \\    \bottomrule
  \end{tabular}
\end{table}

\subsection{Results}

\begin{figure*}[t!]
    \centering
    \begin{subfigure}[t]{0.5113\linewidth}
        \centering
        \includegraphics[width=\linewidth]{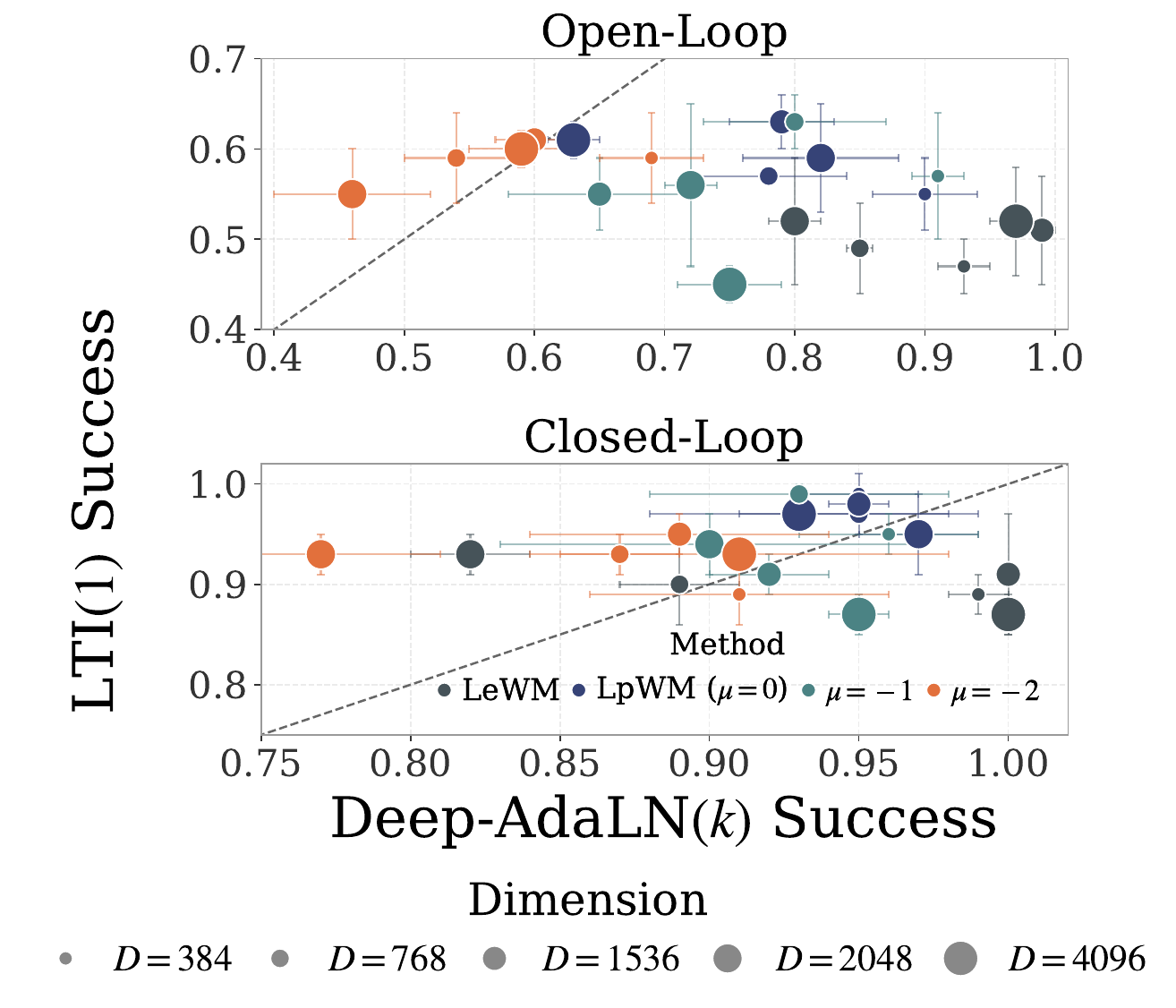}
        \captionsetup{justification=centering}
        \caption{Wall Environment. LeWM vs. LpWM}
        \label{fig:sparsitylinearityfigsub1}
    \end{subfigure}
    \hfill
    \begin{subfigure}[t]{0.4686\linewidth}
        \centering
        \includegraphics[width=\linewidth]{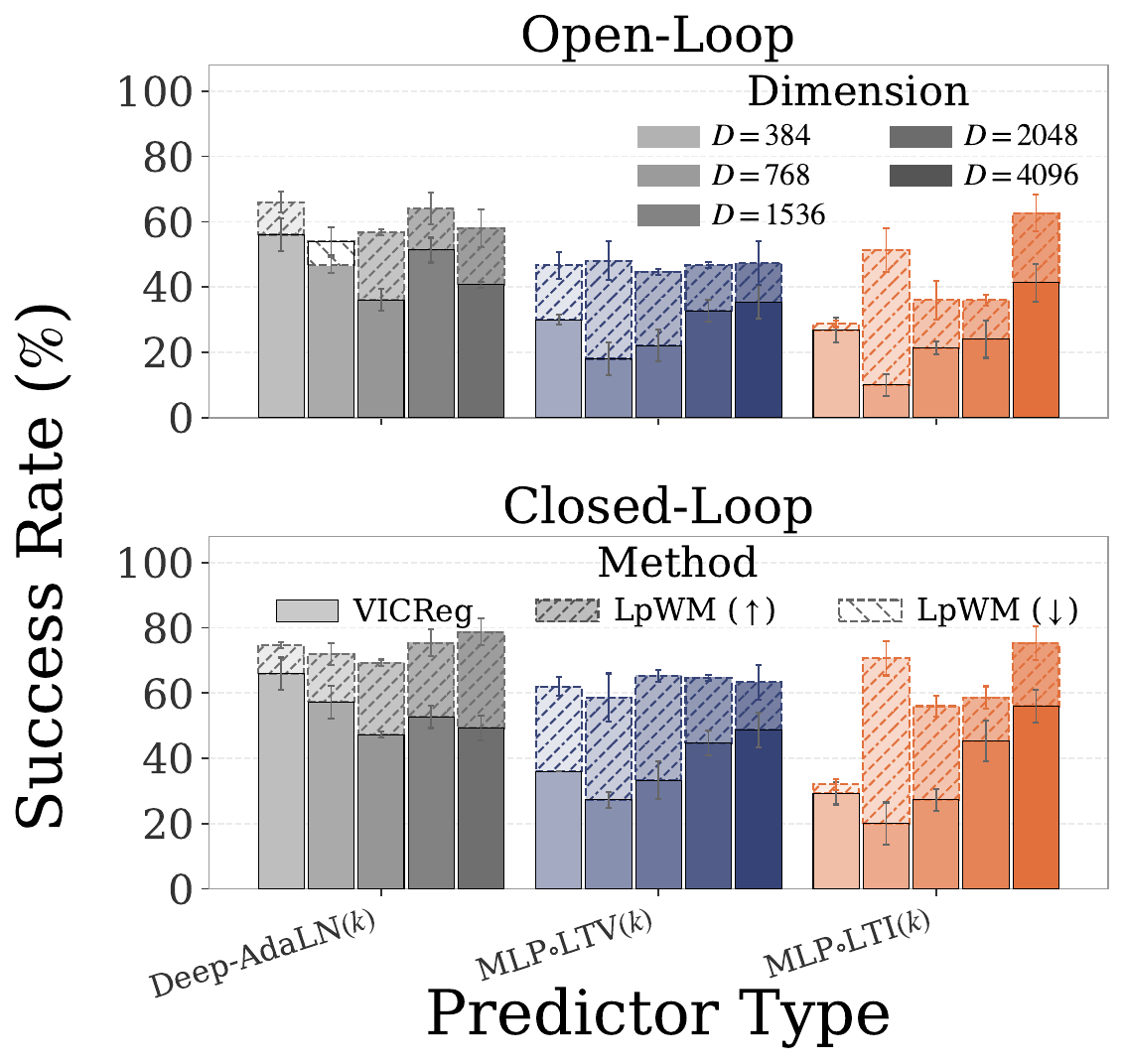}
        \captionsetup{justification=centering}
        \caption{PushT Environment. VICReg vs. LpWM}
        \label{fig:sparsitylinearityfigsub2vicreg}
    \end{subfigure}
    \caption{\textbf{Linear predictor is sufficient for the wall environment regardless of dense or sparse embeddings, and LpWM outperforms VICReg-based JEPA across predictor settings in PushT.} \textbf{(a)} We plot the planning success rate of the linear LTI$(1)$ predictor against the nonlinear Deep-AdaLN$(k)$, where each dot denotes one setting of method and feature dimension. For simplicity, we use $\mu=*$ to denote LpWM with the target distribution parameter $\mu$. We show the open-loop evaluation on top and closed-loop results at bottom. Both are ran using CEM planner. We observe that the closed-loop success rate for LTI$(1)$ predictor using SIGReg is already approaching $100\%$, indicating that the wall environment can already be described well by a linear dynamics model in the latent space. (b) We have already shown LeWM vs. LpWM in Figure~\ref{fig:sparsitylinearityfigsub22}. Here we compare LpWM with a variant of LeWM where we replaced SIGReg with VICReg in the PushT environment across predictor settings. As oppposed to SIGReg, we observe consistent improvement from LpWM over VICReg even for the Deep-AdaLN$(k)$ predictor, and the gap between sparse LpWM and dense methods (i.e. VICReg here) continues to hold for MLP$\circ$LTV$(k)$ and MLP$\circ$LTI$(k)$ predictors. See full hyperparameter sweeps in Appendix~\ref{subsec:vicregsweepinpusht}.}
    \label{fig:sparsitylinearityfig}
\end{figure*}

In Figure~\ref{fig:sparsitylinearityfigsub1}, we observe that for the Wall environment, even the simplest linear LTI$(1)$ predictor reaches nearly $100\%$ closed-loop CEM success rate for both LeWM and LpWM, leaving little room to distinguish sparse and dense representations.

We therefore turn to the harder PushT environment with results shown in Figure~\ref{fig:sparsitylinearityfigsub22}. At the lowest predictor capacity, LTI$(1)$ fails for both LeWM and LpWM. Thus PushT differs from Wall in that the underlying dynamics cannot be modelled well by a linear predictor in practice.

At the highest capacity, Deep-AdaLN$(k)$ and Shallow-AdaLN$(k)$ perform similarly for both representations, indicating that the predictor complexity saturates and we don't observe significant advantages of sparse LpWM against dense LeWM.

The difference emerges at intermediate capacity: on PushT, sparse LpWM improves planning success over dense LeWM by $24\%$--$57\%$ with MLP$\circ$LTI$(k)$, $36\%$--$45\%$ with MLP$\circ$LTV$(k)$, and $11\%$--$23\%$ with LTI$(k)$. These performance gaps are verified across extensive hyperparameter sweeps for both sparse LpWM and dense LeWM in Appendix~\ref{subsec:lpwmvslewmsweepinpusht}. We also report the $\ell_0$ norms of the LpWM embeddings in Table~\ref{tab:l0-sparsity}.

Additionally, Figure~\ref{fig:sparsitylinearityfigsub2vicreg} shows that LpWM outperforms VICReg across Deep-AdaLN$(k)$, MLP$\circ$LTV$(k)$, and MLP$\circ$LTI$(k)$ predictors. This indicates that the benefit of sparsity extends beyond comparison with the specific dense isotropic Gaussian targets used in LeWM to more general dense representations. Moreover, LpWM outperforms VICReg even with the high-capacity Deep-AdaLN$(k)$ predictor. We hypothesize that this is because second-order moment constraints in VICReg alone can be insufficient as anti-collapse targets because they leave the target distribution underspecified.

Overall, these results show that the benefit of sparsity depends on predictor capacity relative to dynamics complexity. In the simple Wall environment, even linear predictors are sufficient, leaving little room for sparsity to improve performance. In the harder PushT environment, sparsity provides substantial gains at intermediate capacities, but these gains diminish with sufficiently expressive predictors. As the underlying dynamics become more complex, the advantages of sparsity may extend to increasingly expressive predictors, potentially making sparse representations a preferable default over dense representations for modeling latent dynamics. We leave a systematic investigation of this hypothesis to future work.

\section{Does Sparsity Lead to Interpretable Latent Dynamics?}\label{sec:sparseforinterpretability}

Section~\ref{sec:doessparsitysimplify} shows that sparse codes can simplify latent prediction. We now ask what structure emerges in these representations. We call a representation mode-factored when its binary support primarily identifies the discrete dynamics regime, while feature magnitudes capture finer within-regime state. We first test whether this structure emerges naturally in Piecewise, where the dynamics mode is known by construction, and then examine whether it extends to OGBench-Cube \citep{park2025ogbenchbenchmarkingofflinegoalconditioned}, where contact provides a natural temporal regime change.

\begin{figure*}[t!]
    \centering
    \begin{subfigure}[t]{0.32\linewidth}
        \centering
        \includegraphics[width=\linewidth]{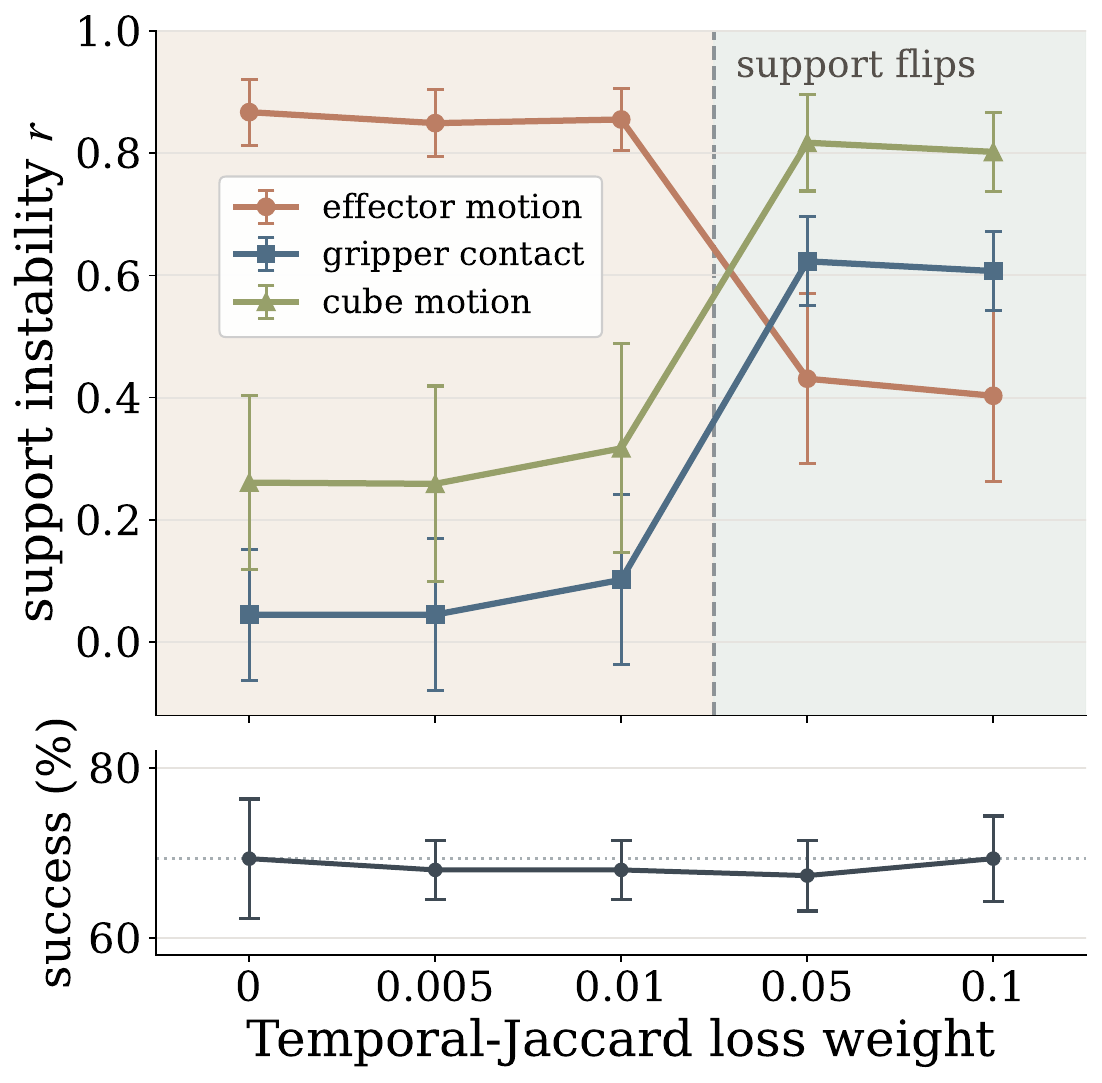}
        \captionsetup{justification=centering}
        \caption{Temporal Jaccard Regularization}
        \label{fig:fourquadrantssubfig1}
    \end{subfigure}
    \hfill
    \begin{subfigure}[t]{0.32\linewidth}
        \centering
        \includegraphics[width=\linewidth]{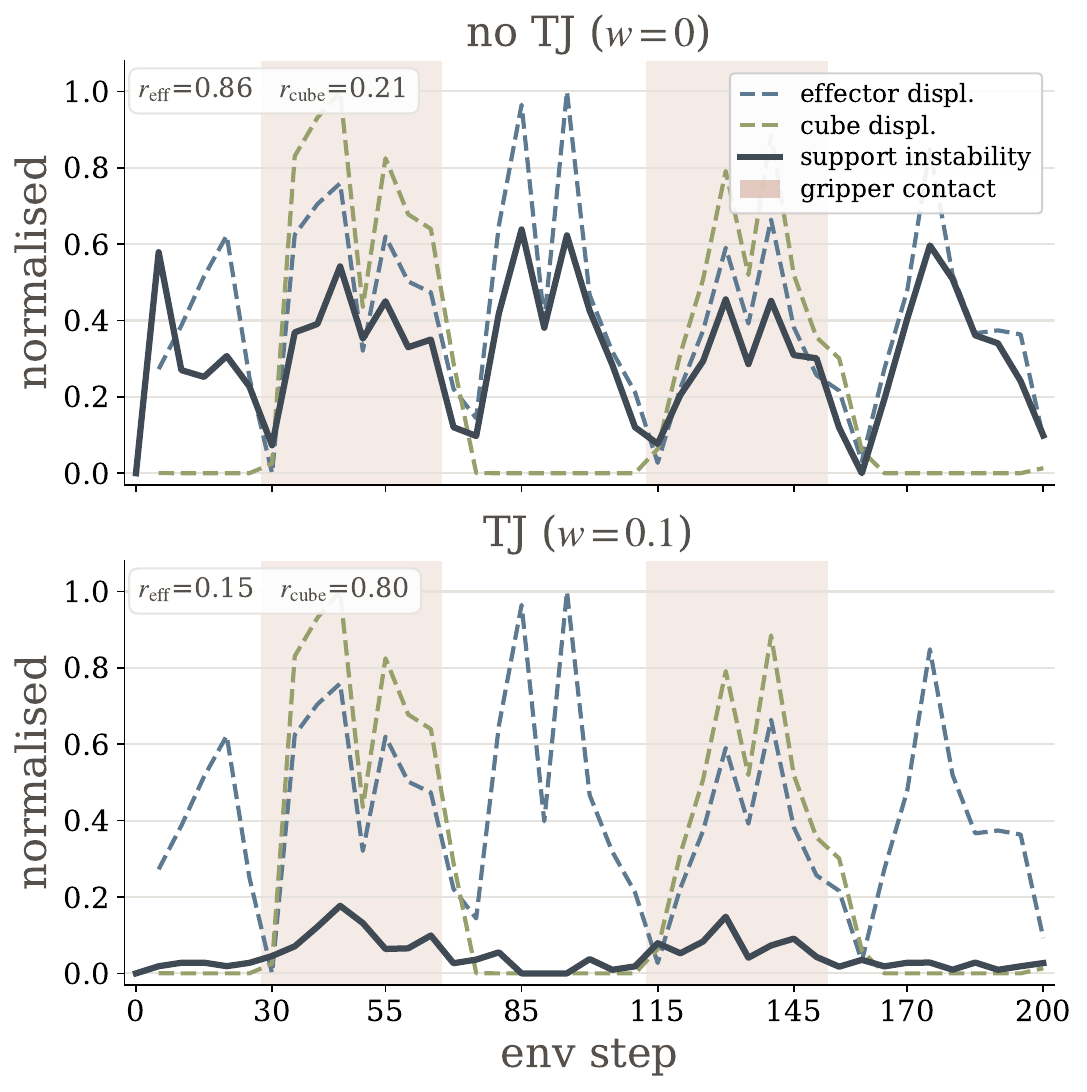}
        \captionsetup{justification=centering}
        \caption{Example Episode.}
        \label{fig:controllablesparsitysubfig2}
    \end{subfigure}
    \hfill
    \begin{subfigure}[t]{0.32\linewidth}
        \centering
        \includegraphics[width=\linewidth]{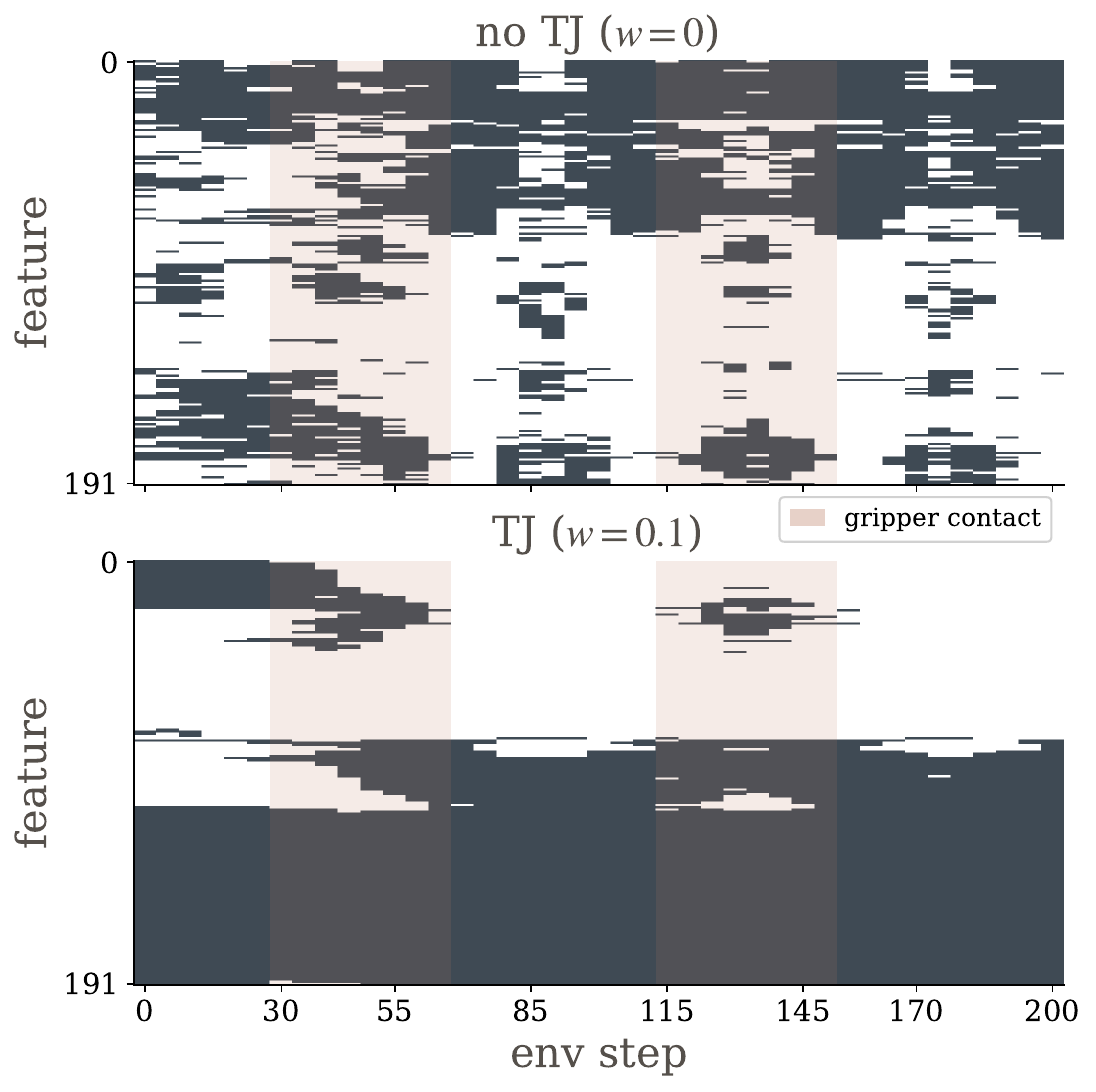}
        \captionsetup{justification=centering}
        \caption{Support Raster Plot.}
        \label{fig:paretofrontieraccandsparsesubfig3}
    \end{subfigure}
    \caption{\textbf{Temporal-Jaccard (TJ) makes the cube support track contact, not effector motion.} \textbf{(a)} As TJ strength increases, support-instability correlation with effector motion drops ($0.87{\to}0.40$) while cube-motion ($0.26{\to}0.80$) and gripper-contact ($0.05{\to}0.61$) rise---at no change in planning success. \textbf{(b)} One example episode: without TJ (top) support instability tracks effector displacement ($r_{\mathrm{eff}}{=}0.86$); with TJ (bottom) it tracks cube displacement ($r_{\mathrm{cube}}{=}0.80$), spiking inside the shaded contact windows. \textbf{(c)} Support raster: without TJ the support rapidly transitions across frame; with TJ it is smoothly varying in time, reorganizing almost during the shaded contact window.}
    \label{fig:TBD1fig}
\end{figure*}

\subsection{Discovering Piecewise-Affine Dynamics}\label{sec:piecewiseanalysisresult}

Piecewise is a 2D navigation environment partitioned into zones with different force fields, yielding piecewise-affine dynamics (Appendix~\ref{app:piecewiseenvintro}). Empirically, we observe that success rate reaches $84.7\%$ for LpWM versus $65.3\%$ for LeWM with random goals, while performance is saturated for goals sampled from the set of evaluation episodes for both methods (Table~\ref{tab:pw-sweep}).

To test whether the sparse support recovers the ground-truth piecewise structure, we discretize the Piecewise environment into a $20\times20$ grid, yielding $400$ possible agent locations. For each location, we compute the corresponding embedding and binarize it into a support vector in $\{0,1\}^D$. For two binary support vectors $\mathbf{x},\mathbf{y}\in\{0,1\}^D$, we measure their overlap using the Jaccard index
\begin{align}
J(\mathbf{x},\mathbf{y})=\frac{\sum_{i=1}^{D}\mathbbm{1}_{\mathbf{x}_i=1\land\mathbf{y}_i=1}}{\sum_{i=1}^{D}\mathbbm{1}_{\mathbf{x}_i=1\lor\mathbf{y}_i=1}},
\end{align}
where $\land,\lor$ are logical ``and'', ``or'' symbols. We defer further details and its differentiable relaxation given in Appendix~\ref{app:jaccardlossessec}. We then select a reference cell, marked by $+$ in Figure~\ref{fig:aaa1}, and compute its Jaccard similarity with every other cell. The resulting similarity is high within the reference zone and drops sharply across zone boundaries (dashed), a pattern that persists even when the zones contain no visual cues (bottom). This indicates that LpWM recovers the underlying dynamics regime from action-conditioned prediction rather than appearance alone, organizing the mode structure into which features are active.

Linear probes further confirm a factorized code (Figure~\ref{fig:aaa2}): the binary support decodes the ground-truth zone at near-perfect accuracy for sparse LpWM, as accurately as the full embedding, while the continuous agent position is decoded better from the magnitudes. Therefore, the support carries the discrete zone mode, and the magnitudes carry the within-zone agent's state.

Finally, this structure also accompanies strong long-horizon planning (Figure~\ref{fig:aaa3}): sparse LpWM outperforms dense LeWM at every horizon $H$, with the gap widening as $H$ increases.

\subsection{From Navigation to Contact-Rich Dynamics}\label{sec:cubeanalysisresult}

Does the sparse code also separate distinct phases of the dynamics in contact-rich manipulation? On OGBench-Cube, contact provides a natural regime change between free arm motion and arm--cube interaction. As a temporal analogue of the spatial partition in Piecewise, we measure the support instability $1-J(\mathbf{z}_t,\mathbf{z}_{t+1})$ between consecutive frames and compute its Pearson correlation with effector motion, cube motion, and gripper contact.

Unlike Piecewise, sparsity alone does not determine which temporal factor is encoded by the support. RDMReg constrains only the per-frame marginal, and without an explicit temporal prior the support tends to follow the dominant fast-varying signal. Accordingly, without temporal regularization, support instability behaves largely as a motion detector: it correlates strongly with effector motion, with $r\approx0.87$, but only weakly with contact, with $r\approx0.05$ (Figure~\ref{fig:fourquadrantssubfig1}).

To test whether the support can instead track slower physical regime changes, we additionally consider an optional temporal Jaccard (TJ) loss that encourages support stability across adjacent frames. As the TJ weight increases, support instability becomes less aligned with effector motion and increasingly aligned with cube motion and contact (Figure~\ref{fig:controllablesparsitysubfig2}); for example, the correlation with cube motion increases from $r_{\mathrm{cube}}=0.21$ to $0.80$. The support raster in Figure~\ref{fig:paretofrontieraccandsparsesubfig3} shows the same effect visually.

These results highlight a limitation of vanilla LpWM: sparsity alone may not ensure that support transitions correspond to semantically meaningful dynamical events, and without temporal structure the support can instead behave largely as a motion detector. However, they also suggest that, with an appropriate temporal prior, sparse support can become an interpretable tracker of physical regime changes such as object interaction and contact. We leave the design of temporal regularization for more complex dynamics to future work.

\section{Conclusion}
\label{sec:discussion}

We introduced LpWM, an action-conditioned JEPA regularized with RDMReg to produce non-negative, exactly sparse latents. Motivated by a one-hot linearization result, we showed empirically that sparsity simplifies the action-conditioned latent dynamics: at a fixed, intermediate predictor capacity, a shallow predictor plans over sparse codes where it fails over dense ones on PushT. We further found the sparse code is mode-factored---its support encodes the discrete dynamics regime ($94$--$99\%$ decodable on Piecewise) and, with a temporal-Jaccard prior, tracks contact on OGBench-Cube. Hence sparse representations can reduce the predictor complexity for control while leading to more interpretable dynamics. 

\section*{Acknowledgement}

We thank Amir Bar, Raktim Goswami, Ying Wang, Wancong Zhang, Gaoyue Zhou, and Kefei Zhu for helpful discussions. This work was supported in part by AFOSR under grant FA95502310139 and NSF Award 1922658. This work was also supported through AMI Labs and NYU IT High Performance Computing resources, services, and staff expertise.

\bibliographystyle{plainnat}
\bibliography{references}


\appendix

\section{Related Work}\label{sec:relatedwork}

\label{sec:related}
\paragraph{Latent world models for planning.} Joint-Embedding Predictive Architectures (JEPA) predict in
a learned latent space rather than reconstructing pixels \citep{assran2023selfsupervisedlearningimagesjointembedding,bardes2024revisitingfeaturepredictionlearning}. Equipped with action-conditioning, JEPA-style world models can be viewed as latent dynamics model of the environment, and can be used for planning with model predictive control \citep{assran2025vjepa2selfsupervisedvideo,zhou2025dinowmworldmodelspretrained,sobal2025learningrewardfreeofflinedata,maes2026leworldmodelstableendtoendjointembedding}. Hierarchical JEPA has also been shown to outperform standard flat JEPA models with improved planning performances \citep{zhang2026hierarchicalplanninglatentworld}. 

\paragraph{Feature Regularization for Latent World Models.} JEPA-style world models plan in the latent space, and hence proper regularizations of the embedding matters for both anti-collapse and also better planning performances. Preventing collapse without negative examples \citep{chen2020simpleframeworkcontrastivelearning} is the central design problem of non-contrastive SSL \citep{balestriero2023cookbookselfsupervisedlearning}. The stop-gradient and EMA techniques \citep{grill2020bootstraplatentnewapproach} have been used in DINO-WM \citep{zhou2025dinowmworldmodelspretrained}, V-JEPA 2 AC \citep{assran2025vjepa2selfsupervisedvideo} and related work. More recently, information maximization based techniques have also been proposed, which aims at reducing feature redundancies \citep{zbontar2021barlowtwinsselfsupervisedlearning,ermolov2021whiteningselfsupervisedrepresentationlearning,yerxa2023learningefficientcodingnatural} and maximizing feature entropy through distribution-matching regularizations \citep{chakraborty2025improvingpretrainedselfsupervisedembeddings,balestriero2025lejepaprovablescalableselfsupervised,kuang2026radialvcreginformativerepresentationlearning,kuang2026rectifiedlpjepajointembeddingpredictive}.

Beyond collapse preventions, other techniques have been proposed to improve the latent space for planning and control. \citet{wang2026temporalstraighteninglatentplanning} show that adding a temporal straightening loss \citep{henaff2019perceptual} reduces the curvature of the embeddings across time and hence enables better planning. SCALE \citep{hu2026scalestatecalibratedlatentembeddings} aligns latent distances with task-relevant state differences to make the planner-facing geometry more informative. \citet{wang2026adajepaadaptivelatentworld} also considers adapting the final layer features online to improve planning performance and achieves better out-of distribution generalization. It's also helpful to use inverse-dynamics modeling \citep{Lesort_2018}, temporal smoothness \citep{sobal2025learningrewardfreeofflinedata}, which can lead to better downstream planning performances \citep{terver2026lightweightlibraryenergybasedjointembedding}. 

\paragraph{Sparse and Discrete Representations in World Models.}
Discrete latent representations have previously been explored in world models. DreamerV2 \citep{hafner2022masteringataridiscreteworld} replaces Gaussian stochastic states with a product of categorical variables, which can equivalently be viewed as a structured sparse binary representation, and finds that categorical latents substantially improve performance over Gaussian ones. The authors hypothesize that sparsity may contribute to this improvement and that categorical states may better capture non-smooth transitions. Relatedly, Variational Sparse Gating \citep{jain2022learningrobustdynamicsvariational} introduces sparsity into latent dynamics by sparsely updating recurrent-state dimensions through stochastic binary gates. In contrast, we study sparsity directly as a geometry for continuous latent representations: LpWM learns distributed, non-negative sparse codes whose support is not fixed by a categorical grouping. Our focus is also different in that we ask whether sparsity reduces the complexity of the action-conditioned transition model itself, and whether the learned support recovers discrete dynamical regimes while the nonzero magnitudes retain continuous within-regime state.

\paragraph{Linearizing Nonlinear Dynamics.}

Koopman operator theory studies nonlinear autonomous systems through lifted observables with linear evolution, with extensions to controlled systems enabling linear prediction and MPC \citep{proctor2016dynamic,korda2018linear}. Recent work further characterizes the conditions for which exact finite-dimensional Koopman embeddings exist for controlled nonlinear systems \citep{shang2026existence}. Related approaches include SINDy, which identifies sparse nonlinear governing equations from candidate function libraries \citep{brunton2016discovering}. Data-driven feedback linearization also learns state and input transformations of yielding linear controlled dynamics \citep{goswami2023data}.

\begin{table}[t]
  \centering\small
  \setlength{\tabcolsep}{4.5pt}
  \caption{\textbf{Sparsity of the LpWM encoder representations across predictor types and latent dimensions.} Each entry is the fraction of non-zero coordinates $\mathbb{E}[\lVert\mathbf{z}\rVert_0]/D$ in the sparse code for the target distribution with $\{\mu=0,\sigma=\sqrt{1/2},p=1\}$, measured on the validation set at the tuned-best cell for the LpWM run paired with each predictor at latent dimension $D$. Dense LeWM has active fraction $1.0$ by construction. The learned code stays $\sim\!30$--$65\%$ active regardless of predictor or dimension, confirming the codes underlying the planning comparison are genuinely sparse.}
  \label{tab:l0-sparsity}
  \begin{tabular}{lcccccc}
    \toprule
    $D$ & \textbf{Deep-AdaLN}$(k)$ & \textbf{Shallow-AdaLN}$(k)$ & \textbf{MLP$\circ$LTV}$(k)$ & \textbf{MLP$\circ$LTI}$(k)$ & \textbf{LTI}$(k)$ & \textbf{LTI}$(1)$ \\
    \midrule
    $384$  & $0.34$ & $0.33$ & $0.37$ & $0.28$ & $0.41$ & $0.63$ \\
    $768$  & $0.44$ & $0.37$ & $0.54$ & $0.53$ & $0.41$ & $0.63$ \\
    $1536$ & $0.49$ & $0.39$ & $0.49$ & $0.54$ & $0.41$ & $0.60$ \\
    $2048$ & $0.42$ & $0.46$ & $0.51$ & $0.46$ & $0.43$ & $0.61$ \\
    $4096$ & $0.48$ & $0.49$ & $0.47$ & $0.43$ & $0.44$ & $0.56$ \\
    \bottomrule
  \end{tabular}
\end{table}

\begin{table}[h]
  \centering
  \small
  \caption{\textbf{Success Rate (\%) over 50 Test Samples in the Piecewise Environments with Varying Numbers of Zones}. Piecewise ($n\times n$) denotes the environment with varying numbers of zones. Goal images are either sampled from the evaluation dataset or randomly sampled from all possible locations of the dot agent. $H$ denotes the horizon and $R$ represents the receding-horizon. Results are shown as mean$\pm$std over 3 seeds. \textbf{Bold} denotes the best performance.}
  \label{tab:pw-sweep}
  \begin{tabular}{llccc}
    \toprule
    & & {Goals from Evaluation Set} & \multicolumn{2}{c}{Random Goals} \\
    \cmidrule(lr){3-3}\cmidrule(lr){4-5}
    Env & Method & $H=R=5$ & $H=5,R=1$ & $H=R=5$ \\
    \midrule
    Piecewise ($2\times 2$)
      & LpWM (Ours) & $\mathbf{99.33{\pm}1.15}$ & $\mathbf{84.67{\pm}4.16}$ & $59.33{\pm}2.31$ \\
      & LpWM+TJ (Ours) & $\mathbf{99.33{\pm}1.15}$ & $82.67{\pm}5.03$ & $\mathbf{68.67{\pm}4.62}$ \\
      & LeWM & $\mathbf{99.33{\pm}1.15}$ & $65.33{\pm}4.16$ & $36.00{\pm}3.46$ \\
    \midrule
    Piecewise ($3\times 3$)
      & LpWM (Ours) & $\mathbf{100.00{\pm}0.00}$ & $58.67{\pm}1.15$ & $56.00{\pm}4.00$ \\
      & LpWM+TJ (Ours) & $\mathbf{100.00{\pm}0.00}$ & $\mathbf{60.00{\pm}2.00}$ & $\mathbf{58.00{\pm}5.29}$ \\
      & LeWM & $\mathbf{100.00{\pm}0.00}$ & $50.00{\pm}5.29$ & $47.33{\pm}6.11$ \\
    \bottomrule
  \end{tabular}
\end{table}

\section{Predictor Designs}\label{app:preddesignsec}

\subsection{Overall Designs}

Table~\ref{tab:predictor-ladder} lists the predictors we compare, in order of decreasing complexity. \textbf{Deep-} and \textbf{Shallow-AdaLN}$(k)$ are $6$- and $1$-block Transformers with AdaLN-zero action conditioning \citep{peebles2023scalablediffusionmodelstransformers}; \textbf{LTI}$(k)$ and \textbf{LTI}$(1)$ are linear (time-invariant) state-space maps over $k$ and $1$ history lags; and \textbf{MLP$\circ$LTI}$(k)$ places a single nonlinear readout $\mathbf{W}\,\mathrm{ReLU}(\cdot)$ on top of the linear core. One caveat is that we use the $\operatorname{RepReLU}$ link function in LpWM when using \textbf{LTI}$(k)$ and \textbf{LTI}$(1)$ predictors, so we don't have exact linear dynamics models. 

We have also designed \textbf{MLP$\circ$LTV}$(k)$, a data-dependent (linear-time-varying) predictor that moves beyond the fixed weight matrices in the linear-time invariant case. 

\subsection{MLP$\circ$LTV$(k)$}

\textbf{MLP$\circ$LTV}$(k)$ retains the MLP$\circ$LTI form
$\hat{\mathbf{z}}_{t+1}=\sigma\!\big(\mathbf{W}\,\mathrm{ReLU}(\sum_{i=0}^{k-1}\mathbf{A}_i(\mathbf{z}_t)\,\mathbf{z}_{t-i}+\mathbf{B}(\mathbf{z}_t)\,\mathbf{a}_t)\big)$
but makes the operator matrices state-dependent. Naively, designing a linear map with a matrix-valued output given a vector-valued input will require a $O(D^3)$ tensor for the feature dimension $D$.

Instead, we consider a structured matrix parameterization, where the state emits only a small gate that modulates a fixed low-rank correction:
\begin{align}
\mathbf{A}_i(\mathbf{z}_t) \;=\; \mathbf{A}_i \;+\; \mathbf{U}_i\,\mathrm{diag}\!\big(\mathbf{g}_i(\mathbf{z}_t)\big)\,\mathbf{V}_i^{\top},
\qquad
\mathbf{g}(\mathbf{z}_t)=\mathrm{sigmoid}(\mathbf{G}\,\mathbf{z}_t)\in[0,1]^{r},
\end{align}
where $\mathbf{A}_i$ is a fixed base operator, $\mathbf{V}_i^{\top}\!:\mathbb{R}^{D}\!\to\!\mathbb{R}^{r}$ and $\mathbf{U}_i\!:\mathbb{R}^{r}\!\to\!\mathbb{R}^{D}$ are learned low-rank factors, and the gate $\mathbf{g}(\mathbf{z}_t)$---a single projection $\mathbf{G}\!:\mathbb{R}^{D}\!\to\!\mathbb{R}^{(k+1)r}$ shared across the $k$ lags and the action term---selects among $r$ fixed low-rank ``modes'' ($r{=}16$ in our experiments). The action operator $\mathbf{B}(\mathbf{z}_t)$ is gated identically. This costs $O(Dr)$ with no $D^3$ term. Crucially, only $\mathbf{z}_t$ drives the gate, so the action $\mathbf{a}_t$ still enters affinely: the dynamics core stays control-affine up to the shared nonlinear readout. 

We zero-initialize the up-projections $\mathbf{U}_i$ (LoRA-style \citep{hu2021loralowrankadaptationlarge}) as an override after $\mu$P initialization, so at initialization every correction vanishes and MLP$\circ$LTV$(k)$ coincides exactly with MLP$\circ$LTI$(k)$. It is therefore a strict superset that recovers the fixed-operator model unless the data reward state-dependence. 

\begin{figure}[t]
  \centering
  \begin{subfigure}[b]{0.24\linewidth}
    \includegraphics[width=\linewidth]{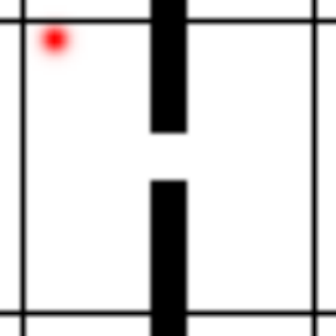}
    \caption{Wall}\label{fig:env-wall}
  \end{subfigure}\hfill
  \begin{subfigure}[b]{0.24\linewidth}
    \includegraphics[width=\linewidth]{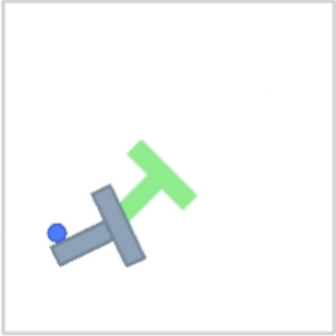}
    \caption{PushT}\label{fig:env-pusht}
  \end{subfigure}\hfill
  \begin{subfigure}[b]{0.24\linewidth}
    \includegraphics[width=\linewidth]{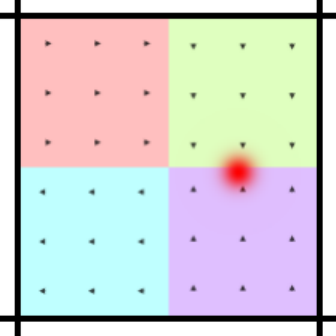}
    \caption{Piecewise}\label{fig:env-piecewise}
  \end{subfigure}\hfill
  \begin{subfigure}[b]{0.24\linewidth}
    \includegraphics[width=\linewidth]{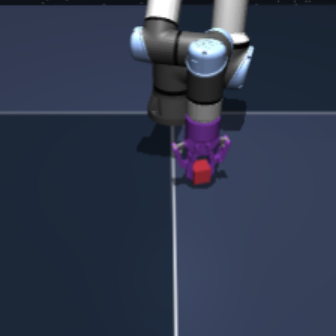}
    \caption{OGBench-Cube}\label{fig:env-cube}
  \end{subfigure}
  \caption{\textbf{Environments.} \textbf{(a)} Wall: 2D navigation through a doorway (red agent). \textbf{(b)} PushT: push a T-shaped block to a target pose (green). \textbf{(c)} Piecewise: an open room partitioned into force-field zones (colored) with a known ground-truth segmentation. \textbf{(d)} OGBench-Cube: a robot arm manipulating a cube.}
  \label{fig:environments}
\end{figure}

\section{Environment}\label{app:envexplained}

\subsection{TwoRoom, PushT, and OGBench Cube}

We use the saved dataset in \citet{zhou2025dinowmworldmodelspretrained} for TwoRoom, PushT, and in \citet{maes2026stableworldmodelplatformreproducibleworld} for OGBench Cube. During planning, we fix the goal location to be $25$ steps away from the starting location and select episodes from the evaluation set. The action block size is fixed to be $5$.

\subsection{Piecewise}\label{app:piecewiseenvintro}

\paragraph{Piecewise.} To measure what the sparse code encodes about the
dynamics, we use a synthetic, analytically-known environment called Piecewise \citep{maes2026leworldmodelstableendtoendjointembedding}. An open
$224\times224$ room (no walls) is divided into a $g\times g$ grid of zones. Each zone $i$ applies a fixed additive bias to motion,
$\mathbf{p}_{t+1}=\mathbf{p}_t+\mathbf{a}_t\cdot\text{speed}+\mathbf{b}[\text{zone}(\mathbf{p}_t)]$, with radial biases
$\mathbf{b}_i=\text{bias\_scale}\cdot(\cos\theta_i,\sin\theta_i)$,
$\theta_i=2\pi i/g^2$. The dynamics are thus locally linear but globally
piecewise-affine, with a known ground-truth zone segmentation and an exact
expert policy (it inverts the motion equation). This is a controlled
environment where the
latent mode is known, so we can test directly whether the sparse support
recovers it. We use $g\in\{2,3\}$ (i.e. 4 and 9 zones) in our experiments.

\subsection{Visualization of Environments}

See Figure~\ref{fig:environments} for example snapshots of the environments. Note that there is another version of the Piecewise environment without the background color rendering as we have shown in Figure~\ref{fig:aaa1}. 

\section{Experimental Details}
\label{app:experimental-details}

Our experiments comprise two complementary studies implemented in separate experimental pipelines. Section~\ref{sec:doessparsitysimplify} systematically varies predictor complexity and latent dimension to test whether sparsity reduces the complexity of the learned dynamics in Wall and PushT environments. The interpretability study in Section~\ref{sec:sparseforinterpretability} instead fixes the model configuration and probes the structure of the learned sparse support in the Piecewise and OGBench-Cube environments. The two pipelines instantiate the same underlying LpWM formulation but differ in architecture and optimization details, summarized in Table~\ref{tab:exp-details}. All comparisons and ablations are performed within the corresponding experimental pipeline.

In both pipelines, we follow the settings in Section~\ref{sec:method-rdm} and Section~\ref{sec:method-arch} for the loss, architecture, and link function designs. The model is an action-conditioned JEPA trained end-to-end. The encoder and predictor outputs pass through a link function $\sigma$, and per-timestep latent marginals are matched by 2-Wasserstein distance to a target distribution obtained by applying the same link to a Generalized Gaussian with shape parameter $p$ and location or mean-shift parameter $\mu$. Dense LeWM uses $\sigma=\mathrm{identity}$ with an isotropic Gaussian target ($p=2$), whereas sparse LpWM uses $\sigma=\mathrm{(Rep)ReLU}$ with a Rectified Laplace target ($p=1$), yielding sparse representations.

\begin{table}[t]
  \centering
  \small
  \setlength{\tabcolsep}{4.5pt}
  \caption{\textbf{Experimental configurations for Section~\ref{sec:doessparsitysimplify} and Section~\ref{sec:sparseforinterpretability}.}
  We list out the experimental pipelines in the following table.}
  \label{tab:exp-details}
  \begin{tabular}{lcccc}
    \toprule
    & \multicolumn{2}{c}{\textbf{Section~\ref{sec:doessparsitysimplify}}} 
    & \multicolumn{2}{c}{\textbf{Section~\ref{sec:sparseforinterpretability}}} \\
    \cmidrule(lr){2-3}\cmidrule(lr){4-5}
    & \textbf{Wall} & \textbf{PushT} & \textbf{Piecewise} & \textbf{OGBench-Cube} \\
    \midrule
    Encoder 
      & \multicolumn{2}{c}{from-scratch ViT, $12$ layers, width $384$}
      & \multicolumn{2}{c}{from-scratch ViT-Tiny, width $192$} \\
    Predictor
      & \multicolumn{2}{c}{See Table~\ref{tab:predictor-ladder}}
      & \multicolumn{2}{c}{Deep-AdaLN$(k)$} \\
    Dimension ($D$) & \multicolumn{2}{c}{$\{384,768,1536,2048,4096\}$} & \multicolumn{2}{c}{$192$} \\
    History $k$
      & $1$ & $3$ & $3$ & $3$ \\
    Frameskip
      & $5$ & $5$ & $5$ & $5$ \\
    Batch size
      & $128$ & $64$ & $128$ & $128$ \\
    Epochs
      & $20$ & $2$ & $10$ & $10$ \\
    Optimizer
      & \multicolumn{2}{c}{Adam, $\mu$P (with base width $384$)}
      & \multicolumn{2}{c}{AdamW} \\
    Base LR
      & \multicolumn{2}{c}{See Appendix~\ref{sec:thoroughhyperforpusht}}
      & \multicolumn{2}{c}{$5\!\times\!10^{-5}$} \\
    Grad.\ clip
      & \multicolumn{2}{c}{0.0}
      & \multicolumn{2}{c}{$1.0$} \\
    Reg.\ weight $\lambda$
      & \multicolumn{2}{c}{See Appendix~\ref{sec:thoroughhyperforpusht}}
      & $25$ & $10$ \\
    Number of projections
      & \multicolumn{2}{c}{$8192$}
      & \multicolumn{2}{c}{$1024$} \\
    Parameter $\mu$
      & \multicolumn{2}{c}{$\{0,-1,-2\}$}
      & $0$ & $0$ \\
    Temporal Jaccard
      & off & off & $\{0.0,0.01\}$ & $\{0.0,0.005,0.01,0.05, 0.1\}$ \\
    \bottomrule
  \end{tabular}
\end{table}

\textbf{Experimental Details for Section~\ref{sec:doessparsitysimplify}.} We follow the settings in DINO-WM \citep{zhou2025dinowmworldmodelspretrained} and set the context length ($k$) for the Wall environment to be $k=1$. PushT uses $k=3$ because velocity is not recoverable from a single observation. For each predictor type and latent feature dimension configuration, we sweep the RDMReg weight $\lambda_{\operatorname{RDMReg}}$ and the $\mu$P base learning rate $\ell_{\operatorname{muP}}$(Appendix~\ref{sec:thoroughhyperforpusht}) and report the best-performing cell in the corresponding grid. We also swept the hyperparameter for LeWM and its VICReg variants in Appendix~\ref{sec:thoroughhyperforpusht} to make sure we obtain strong baseline performances. 

\textbf{Experimental Details for Section~\ref{sec:sparseforinterpretability}.} We follow the settings in stable-worldmodel \citep{maes2026stableworldmodelplatformreproducibleworld}. For both the Piecewise and OGBench-Cube environments, we use a from-scratch ViT-Tiny encoder (width $192$, CLS token), latent dimension $D=192$, and a depth-$6$ Deep-AdaLN$(k)$ predictor with history $k=3$. Models are trained for $10$ epochs using AdamW with learning rate $5\times10^{-5}$, weight decay $10^{-3}$, a linear-warmup cosine schedule with warmup over approximately $1\%$ of the optimization steps, batch size $128$, bf16 precision, and gradient clipping at $1.0$. These experiments do not use $\mu$P since the feature dimension is relatively small, i.e. $D=192$. 

\paragraph{Planning and Evaluation.}

All experiments use the cross-entropy method (CEM) to optimize action sequences in latent space by minimizing the MSE between predicted latent rollouts and the encoded goal representation. Both experimental pipelines use $300$ candidate action sequences per CEM iteration, retain the top $30$ best-performing action sequences, and run CEM for $30$ iterations with an initial variance scale of $1.0$ and planning horizon $H=5$. With a frameskip of $5$, the goal observation is therefore $25$ raw environment steps ahead.

In Section~\ref{sec:doessparsitysimplify}, we report both open-loop and closed-loop planning success rates. Open-loop evaluation executes a single CEM plan without replanning and therefore provides a more direct measure of accumulated model-prediction quality. Closed-loop evaluation uses receding-horizon MPC, replanning after each block of $5$ executed actions, with at most $10$ replanning iterations. Wall and PushT are evaluated over $50$ trajectories whose goals are drawn from the evaluation dataset.

In Section~\ref{sec:sparseforinterpretability}, we additionally evaluate Piecewise using randomly sampled goal images, since performance for both LeWM and LpWM saturates when goals are sampled from the evaluation set. In Table~\ref{tab:pw-sweep}, we also consider a receding horizon of $1$. For OGBench-Cube, we follow the evaluation protocol of \citet{maes2026leworldmodelstableendtoendjointembedding}, where goal observations are sampled only $25$ raw environment steps ahead. This short horizon can make the planning task relatively easy, potentially leading to performance saturation and limiting the discriminative power of the benchmark. We leave evaluation under more challenging longer-horizon and larger-frameskip settings to future work.

\section{Jaccard Index}\label{app:jaccardlossessec}

\subsection{Hard and Soft Jaccard index}
\label{app:soft-jaccard}

The standard Jaccard index
between two sets $A$ and $B$ is $\operatorname{Jac}(A,B)=\vert A\cap B\vert/\vert
A\cup B\vert$. For two binary vectors $\mathbf{x},\mathbf{y}\in\{0,1\}^D$ this is
$J(\mathbf{x},\mathbf{y})=\sum_{i=1}^{D}\mathbbm{1}_{\mathbf{x}_i=1\land\mathbf{y}_i=1}\,/\,\sum_{i=1}^{D}\mathbbm{1}_{\mathbf{x}_i=1\lor\mathbf{y}_i=1}$,
where $\land,\lor$ are logical ``and'', ``or'' symbols. Because the encoder's output is
non-negative but not binary, directly optimizing this set Jaccard is
non-differentiable. We therefore use the soft relaxation \citep{ruzicka1958geobotanik}, for $\mathbf{x},\mathbf{y}\in\mathbb{R}^{D}_{\geq 0}$,
\begin{align}
    J_{S}(\mathbf{x},\mathbf{y})=\frac{\sum_{i=1}^{D}\min(\mathbf{x}_i,\mathbf{y}_i)}{\sum_{i=1}^{D}\max(\mathbf{x}_i,\mathbf{y}_i)+\epsilon},
\end{align}
where $\min(\cdot,\cdot)$ and $\max(\cdot,\cdot)$ are smooth proxies for $\land$
and $\lor$; a higher $J_S$ means more similar supports, and $J_S$ recovers the
exact set Jaccard when its arguments are binary.

\subsection{Temporal Jaccard Loss}\label{sec:temp-jaccard-loss}

Enforcing the feature distribution to be sparse independently across time steps via RDMReg imposes no constraints on the desired temporal transition. Therefore we also consider the optional temporal Jaccard loss (TJ) as a slowness prior on the binary support vector. The temporal Jaccard loss penalizes rapid support transitions between consecutive frames and hence encourages the support to be smoothly varying across time while not interfering with the global sparsity constraints brought by RDMReg. 

\textbf{Temporal Jaccard loss.} Given encoder embeddings $\mathbf{Z} \in \mathbb{R}_{\geq 0}^{B \times T \times D}$, where $B$ is batch size, $T$ is the temporal dimension, and $D$ is the feature dimension, the temporal jaccard loss is the mean support instability over all consecutive frame pairs:
\begin{align}
    \mathcal{L}_{\operatorname{TJ}}=\frac{1}{B(T-1)} \sum_{b=1}^{B} \sum_{t=1}^{T-1} \Big(1 - J_{S}\big(\mathbf{Z}[b,t,:], \mathbf{Z}[b,t+1,:]\big)\Big)
\end{align}
where $\mathbf{Z}[b,t,:]\in\mathbb{R}^D$ denotes the slice of the embedding tensor at batch index $b$ and time index $t$. Minimizing $\mathcal{L}_{\operatorname{TJ}}$ encourages the support between embeddings of consecutive frames to be smoothly varying, and hence reduces the chance of drastic support changes between temporally adjacent frames. 

\section{Maximal-Update Parameterization (\texorpdfstring{$\mu$P}{muP}) for JEPA}
\label{app:mupforjepa}

Our predictor-complexity experiments vary the latent dimensions over more than an order of magnitude, from $D=384$ to $D=4096$. A naive parameterization uses one learning rate for all layers in the neural network as $D$ varies, so performance differences across widths can be confounded by width-specific optimization effects rather than properties of the learned representation. We therefore use maximal-update parameterization ($\mu$P) \citep{yang2022tensorprogramsvtuning} to preserve feature-learning dynamics across width and enable learning-rate transfer.

For a dense linear map $\mathbf{h}=\mathbf{W}\mathbf{x}$ with $\mathbf{W}\in\mathbb{R}^{d_{\mathrm{out}}\times d_{\mathrm{in}}}$, $\mu$P chooses initialization and learning-rate scalings such that both the hidden activations and their training-induced updates remain of order $\Theta(\sqrt{d_{\mathrm{out}}})$ as width changes. Equivalently, the weight and update operators remain at the appropriate spectral scale for stable feature learning \citep{yang2024spectralconditionfeaturelearning}. For Adam-type optimizers, this gives an effective learning-rate scaling proportional to the inverse fan-in,
\begin{align}
\eta_{\mathbf{W}} \propto \frac{1}{d_{\mathrm{in}}}.
\end{align}
We apply this rule to all width-dependent dense maps in the JEPA encoder and predictor. Let $D_0$ denote the reference latent width at which a base learning rate $\eta_{\mathrm{base}}$ is selected. In our experiment, we fix $D_0=384$. For parameter matrices whose fan-in scales linearly with the latent dimension $D$, we use the following scaling equation:
\begin{align}
\eta_{\mathbf{W}}(D)
=
\eta_{\mathrm{base}}
\frac{D_0}{D}.
\end{align}
Width-independent parameter groups retain their base learning rate. Initialization follows the corresponding $\mu$P fan-in scaling. For all other layers or components like layer norm or bias, we follow the design recipe in Appendix B.1 in \citet{yang2022tensorprogramsvtuning}.

Given that $\mu$P has not been demonstrated in JEPA architectures before for hyperparameter transfer across model width to our best knowledge, we don't intend to use $\mu$P to reduce the hyperparameter tuning efforts. Instead, we use $\mu$P here simply to ensure stable feature learning, i.e. maintaining a stable spectral norm of the weight matrices to be around $\|\mathbf{W}\|_{*}=\Theta(\sqrt{d_{\mathrm{out}}/d_{\mathrm{in}}})$.

In Appendix~\ref{sec:thoroughhyperforpusht}, we show the $\mu$P parameterization of learning rate can transfer across width most of the time, although there are occasional exceptions. We intend to investigate this more in future work.

\section{Proofs of Linearization}
\label{app:sparsity-linearity}

\subsection{Proof of Proposition~\ref{prop:linearapproxofnonlinear}}
\label{app:sl-theory}

\begin{proof}\label{proof:linapproxofnonlinearwithsparse}

Since $\mathcal{X}$ is a compact subset of $\mathbb{R}^d$, the open cover of $\mathcal{X}$
\begin{align}
\mathcal{X}
\subseteq
\bigcup_{\mathbf{c}\in\mathcal{X}}
B_\varepsilon(\mathbf{c})
\end{align}
contains a finite subcover. In other words, for every $\varepsilon>0$, there exists $C_\varepsilon=\{\mathbf{c}_1,\ldots,\mathbf{c}_N\}
\subset\mathcal{X}$ such that
\begin{align}
\mathcal{X}
\subseteq
\bigcup_{\mathbf{c}\in C_\varepsilon}
B_\varepsilon(\mathbf{c})
\end{align}
where $B_\varepsilon(\mathbf{c})
:=
\left\{
\mathbf{x}\in\mathcal{X}
:
\|\mathbf{x}-\mathbf{c}\|
<
\varepsilon
\right\}.$ Now consider a deterministic quantization map $Q_\varepsilon:
\mathcal{X}\rightarrow\{1,\ldots,N\}$ such that $\left\|
\mathbf{x}
-
\mathbf{c}_{Q_\varepsilon(\mathbf{x})}
\right\|
\le
\varepsilon$ for all $\mathbf{x}\in\mathcal{X}$.

Let's define the encoder as $E_\varepsilon(\mathbf{x})
=
\mathbf{e}_{Q_\varepsilon(\mathbf{x})},$ where $\mathbf{e}_i$ is the $i$-th standard basis vector of
$\mathbb{R}^N$. Hence $\|E_\varepsilon(\mathbf{x})\|_0=1$ for every $\mathbf{x}\in\mathcal{X}$, i.e. $E_\varepsilon(\cdot)$ is a one-hot encoder. 

Correspondingly, we also define the decoder by $D_\varepsilon(\mathbf{e}_i)=\mathbf{c}_i.$ Now fix any action $\mathbf{a}\in\mathcal{A}$. For each representative
$\mathbf{c}_i$, define $\tau_{\mathbf{a}}(i)
:=
Q_\varepsilon\!\left(
f(\mathbf{c}_i,\mathbf{a})
\right)$ as the transition dynamics map.
Then the transition matrix $\mathbf{P}_\varepsilon(\mathbf{a})$ can be defined column-wise by $\mathbf{P}_\varepsilon(\mathbf{a})\mathbf{e}_i
=
\mathbf{e}_{\tau_{\mathbf{a}}(i)}.$

Thus every column of $\mathbf{P}_\varepsilon(\mathbf{a})$ contains exactly
one nonzero entry, and
\begin{align}
\mathbf{z}_{t+1}
=
\mathbf{P}_\varepsilon(\mathbf{a}_t)\mathbf{z}_t
\end{align}
is exactly linear in $\mathbf{z}$. It remains to bound the decoded one-step error. Fix arbitrary $\mathbf{x}_t\in\mathcal{X}$ and
$\mathbf{a}_t\in\mathcal{A}$, and let $i:=Q_\varepsilon(\mathbf{x}_t).$ Then by the definition of the encoder we have $E_\varepsilon(\mathbf{x}_t)=\mathbf{e}_i$ with $\|\mathbf{x}_t-\mathbf{c}_i\|
\le
\varepsilon.$
We can rollout the latent dynamics to obtain 
\begin{align}
\mathbf{P}_\varepsilon(\mathbf{a}_t)E_\varepsilon(\mathbf{x}_t)
&=
\mathbf{P}_\varepsilon(\mathbf{a}_t)\mathbf{e}_i
=
\mathbf{e}_{\tau_{\mathbf{a}_t}(i)}
\end{align}
The decoded state is thus
\begin{align}
\hat{\mathbf{x}}_{t+1}
&:=
D_\varepsilon\!\left(
\mathbf{P}_\varepsilon(\mathbf{a}_t)
E_\varepsilon(\mathbf{x}_t)
\right) =
\mathbf{c}_{
Q_\varepsilon(f(\mathbf{c}_i,\mathbf{a}_t))
}.
\end{align}
Let the ground-truth next state be 
\begin{align}
\mathbf{x}_{t+1}
=
f(\mathbf{x}_t,\mathbf{a}_t).
\end{align}
By the triangle inequality, we have
\begin{align}
\|\mathbf{x}_{t+1}-\hat{\mathbf{x}}_{t+1}\|
&=
\left\|
f(\mathbf{x}_{t},\mathbf{a}_{t})
-
\mathbf{c}_{
Q_\varepsilon(f(\mathbf{c}_i,\mathbf{a}_{t}))
}
\right\| \\ &\le
\left\|
f(\mathbf{x}_{t},\mathbf{a}_{t})
-
f(\mathbf{c}_i,\mathbf{a}_{t})
\right\|+
\left\|
f(\mathbf{c}_i,\mathbf{a}_{t})
-
\mathbf{c}_{
Q_\varepsilon(f(\mathbf{c}_i,\mathbf{a}_{t}))
}
\right\|.
\end{align}
Since $f$ is Lipschitz continuous in state, we know 
\begin{align}
\left\|
f(\mathbf{x}_{t},\mathbf{a}_{t})
-
f(\mathbf{c}_i,\mathbf{a}_{t})
\right\|
&\le
L\|\mathbf{x}_{t}-\mathbf{c}_i\| \le
L\varepsilon.
\end{align}
Now because $f(\mathbf{c}_i,\mathbf{a}_{t})\in\mathcal{X}$,
the compactness property implies that
\begin{align}
\left\|
f(\mathbf{c}_i,\mathbf{a}_t)
-
\mathbf{c}_{
Q_\varepsilon(f(\mathbf{c}_i,\mathbf{a}_t))
}
\right\|
\le
\varepsilon.
\end{align}
Therefore the approximation error is upper bounded by
\begin{align}
\|\mathbf{x}_{t+1}-\hat{\mathbf{x}}_{t+1}\|
&\le
L\varepsilon+\varepsilon=
(L+1)\varepsilon.
\end{align}

Since the bound is uniform over $\mathbf{x}_t$ and $\mathbf{a}_t$,
\begin{align}
\sup_{\mathbf{x}_t\in\mathcal{X},\,\mathbf{a}_t\in\mathcal{A}}
\left\|
f(\mathbf{x}_t,\mathbf{a}_t)
-
D_\varepsilon\!\left(
\mathbf{P}_\varepsilon(\mathbf{a}_t)
E_\varepsilon(\mathbf{x}_t)
\right)
\right\|
\le
(L+1)\varepsilon.
\end{align}
Now since the Lipschitz constant $L$ is fixed, we know that $(L+1)\varepsilon\rightarrow0$ as $\varepsilon\rightarrow0$. Therefore, we arrive at
\begin{align}
\lim_{\varepsilon\rightarrow0}
\sup_{\mathbf{x}_t\in\mathcal{X},\,\mathbf{a}_t\in\mathcal{A}}
\left\|
f(\mathbf{x}_t,\mathbf{a}_t)
-
D_\varepsilon\!\left(
\mathbf{P}_\varepsilon(\mathbf{a}_t)
E_\varepsilon(\mathbf{x}_t)
\right)
\right\|
=
0.
\end{align}
\end{proof}

\begin{lemma}[Finite-horizon rollout bound]\label{lemma:finiterollout}
Under the assumptions of Proposition~\ref{prop:linearapproxofnonlinear}, let $\mathbf{x}_{t+1}
=
f(\mathbf{x}_t,\mathbf{a}_t)$ for $t=0,\ldots,H-1$ be the ground-truth trajectory under an arbitrary action sequence
$\mathbf{a}_0,\ldots,\mathbf{a}_{H-1}\in\mathcal{A}$. Initialize $\mathbf{z}_0=E_\varepsilon(\mathbf{x}_0)$ and recursively define the latent transition as $\mathbf{z}_{t+1}=\mathbf{P}_\varepsilon(\mathbf{a}_t)\mathbf{z}_t.$
Let $\hat{\mathbf{x}}_t=D_\varepsilon(\mathbf{z}_t)$ be the decoded state. 
Then
\begin{align}
\|\mathbf{x}_H-\hat{\mathbf{x}}_H\|
\le
\varepsilon
\sum_{k=0}^{H}L^k.
\end{align}
\end{lemma}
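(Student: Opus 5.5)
Let $e_t:=\|\mathbf{x}_t-\hat{\mathbf{x}}_t\|$. The plan is an induction on $t$ that tracks $e_t$ and reuses the one-step decomposition from the proof of Proposition~\ref{prop:linearapproxofnonlinear}. The key observation is that the latent iterate stays one-hot. The columns of $\mathbf{P}_\varepsilon(\mathbf{a})$ are standard basis vectors, so $\mathbf{z}_t=\mathbf{e}_{i_t}$ for an index $i_t$ with $i_0=Q_\varepsilon(\mathbf{x}_0)$ and $i_{t+1}=\tau_{\mathbf{a}_t}(i_t)=Q_\varepsilon(f(\mathbf{c}_{i_t},\mathbf{a}_t))$. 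Hence $\hat{\mathbf{x}}_t=\mathbf{c}_{i_t}$, and for $t\ge1$ we get $\hat{\mathbf{x}}_{t}=\mathbf{c}_{Q_\varepsilon(f(\hat{\mathbf{x}}_{t-1},\mathbf{a}_{t-1}))}$. In other words, the decoded rollout is exactly ``apply the true dynamics to the current representative, then re-quantize.''

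\textbf{Base case.} Since $\hat{\mathbf{x}}_0=\mathbf{c}_{Q_\varepsilon(\mathbf{x}_0)}$, the defining property of $Q_\varepsilon$ gives $e_0\le\varepsilon$.

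\textbf{Inductive step.} For $t\ge0$, split with the triangle inequality as in the proposition:
\begin{align}
e_{t+1}\le \|f(\mathbf{x}_t,\mathbf{a}_t)-f(\hat{\mathbf{x}}_t,\mathbf{a}_t)\| + \|f(\hat{\mathbf{x}}_t,\mathbf{a}_t)-\mathbf{c}_{Q_\varepsilon(f(\hat{\mathbf{x}}_t,\mathbf{a}_t))}\|.
\end{align}
The first term is at most $L e_t$ by the uniform Lipschitz assumption; this uses $\hat{\mathbf{x}}_t=\mathbf{c}_{i_t}\in\mathcal{X}$. The second term is at most $\varepsilon$ because $f$ maps into $\mathcal{X}$ and $Q_\varepsilon$ has covering radius $\varepsilon$. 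So $e_{t+1}\le Le_t+\varepsilon$.

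\textbf{Unrolling.} Iterating the recursion from $e_0\le\varepsilon$ gives $e_H\le L^H e_0+\varepsilon\sum_{k=0}^{H-1}L^k\le\varepsilon\sum_{k=0}^{H}L^k$, which is the claimed bound.

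The one step that needs care is the re-anchoring in the inductive step. The error must be compared against $f(\hat{\mathbf{x}}_t,\mathbf{a}_t)$, not $f(\mathbf{c}_{Q_\varepsilon(\mathbf{x}_t)},\mathbf{a}_t)$. Quoting the one-step bound of Proposition~\ref{prop:linearapproxofnonlinear} directly would be wrong, because the rollout does not re-encode the true state at each step. This is handled by the identity $\hat{\mathbf{x}}_t=\mathbf{c}_{i_t}$, which lets the Lipschitz step absorb the accumulated error $e_t$ rather than a fresh $\varepsilon$. Everything else is routine. The corollary then follows by substituting $\varepsilon_N=\tfrac{\sqrt d}{2}N^{-1/d}$ and letting $N\to\infty$ with $H$ fixed.
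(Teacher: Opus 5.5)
Your proposal is correct and follows essentially the same route as the paper: the paper also uses the fact that the latent iterate stays one-hot, so $\hat{\mathbf{x}}_{t+1}=q_\varepsilon(f(\hat{\mathbf{x}}_t,\mathbf{a}_t))$ with $q_\varepsilon=D_\varepsilon\circ E_\varepsilon$. It then derives $e_{t+1}\le Le_t+\varepsilon$ by the same triangle-inequality split and unrolls from $e_0\le\varepsilon$. Your explicit remark that $\hat{\mathbf{x}}_t=\mathbf{c}_{i_t}\in\mathcal{X}$, which the Lipschitz step requires, is a detail the paper leaves implicit.
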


\begin{proof}
Define the decoded quantization map as $q_\varepsilon(\mathbf{x})
:=
D_\varepsilon(E_\varepsilon(\mathbf{x}))
=
\mathbf{c}_{Q_\varepsilon(\mathbf{x})}.$ By construction, $\|\mathbf{x}-q_\varepsilon(\mathbf{x})\|
\le
\varepsilon$ for all $\mathbf{x}\in\mathcal{X}$. Since every latent state $\mathbf{z}_t$ is one-hot, we know that if $\mathbf{z}_t=\mathbf{e}_i,$ then $\hat{\mathbf{x}}_t
=
D_\varepsilon(\mathbf{z}_t)
=
\mathbf{c}_i.$
Since the latent dynamics give
\begin{align}
\mathbf{z}_{t+1}
&=
\mathbf{P}_\varepsilon(\mathbf{a}_t)\mathbf{e}_i=
\mathbf{e}_{
Q_\varepsilon(f(\mathbf{c}_i,\mathbf{a}_t))
}.
\end{align}
We know that 
\begin{align}
\hat{\mathbf{x}}_{t+1}
&=
D_\varepsilon(\mathbf{z}_{t+1})=
q_\varepsilon\!\left(
f(\hat{\mathbf{x}}_t,\mathbf{a}_t)
\right).
\end{align}
Define the scalar rollout error as $e_t
:=
\|\mathbf{x}_t-\hat{\mathbf{x}}_t\|.$ Then
\begin{align}
e_{t+1}
&=
\left\|
f(\mathbf{x}_t,\mathbf{a}_t)
-
q_\varepsilon\!\left(
f(\hat{\mathbf{x}}_t,\mathbf{a}_t)
\right)
\right\| \\ &\le
\left\|
f(\mathbf{x}_t,\mathbf{a}_t)
-
f(\hat{\mathbf{x}}_t,\mathbf{a}_t)
\right\| \nonumber+
\left\|
f(\hat{\mathbf{x}}_t,\mathbf{a}_t)
-
q_\varepsilon\!\left(
f(\hat{\mathbf{x}}_t,\mathbf{a}_t)
\right)
\right\| \\
&\le
Le_t+\varepsilon.
\end{align}

At $t=0$, $e_0
=
\|\mathbf{x}_0-\hat{\mathbf{x}}_0\|
\le
\varepsilon.$ Rolling out $H$ steps give us
\begin{align}
e_H
&\le
L^He_0
+
\varepsilon\sum_{k=0}^{H-1}L^k \le
L^H\varepsilon
+
\varepsilon\sum_{k=0}^{H-1}L^k =
\varepsilon\sum_{k=0}^{H}L^k.
\end{align}
Hence we arrive at our conclusion
\begin{align}
\|\mathbf{x}_H-\hat{\mathbf{x}}_H\|
\le
\varepsilon\sum_{k=0}^{H}L^k.
\end{align}
\end{proof}

\subsection{Proof of Corollary~\ref{coro:sparsityinhighd}}
\begin{proof}\label{proof:sparsityinhighdimproof}
Partition each coordinate interval $[0,1]$ into $n$ subintervals of equal
length $h=\frac{1}{n}.$ The resulting partition of $[0,1]^d$ consists of $n^d=N$ axis-aligned
hypercubes, each with side length $h$.

Let $\mathbf{c}_i$ denote the center of one such hypercube. For any point
$\mathbf{x}$ in that cell, each coordinate differs from the corresponding
coordinate of $\mathbf{c}_i$ by at most $h/2$. Hence
\begin{align}
|\mathbf{x}_j-\mathbf{c}_{i,j}|
\le
\frac{h}{2},
\qquad
j=1,\ldots,d.
\end{align}
Therefore, using the Euclidean norm,
\begin{align}
\|\mathbf{x}-\mathbf{c}_i\|^2
&=
\sum_{j=1}^{d}
|\mathbf{x}_j-\mathbf{c}_{i,j}|^2 \le
\sum_{j=1}^{d}
\left(\frac{h}{2}\right)^2 =
d\frac{h^2}{4}.
\end{align}
Taking square roots gives $\|\mathbf{x}-\mathbf{c}_i\|
\le
\frac{\sqrt{d}}{2}h.$ Since $h=1/n$, we arrive at
\begin{align}
\varepsilon_N
=
\frac{\sqrt{d}}{2n}=\frac{\sqrt{d}}{2}N^{-1/d}.
\end{align}

Based on Proposition~\ref{prop:linearapproxofnonlinear}, the one-step decoded prediction error is bounded by
\begin{align}
\sup_{\mathbf{x},\mathbf{a}}
\left\|
f(\mathbf{x},\mathbf{a})
-
D_{\varepsilon_N}\!\left(
\mathbf{P}_{\varepsilon_N}(\mathbf{a})E_{\varepsilon_N}(\mathbf{x})
\right)
\right\|
\le
(L+1)\varepsilon_N.
\end{align}
Substituting the expression for $\varepsilon_N$ gives
\begin{align}
\sup_{\mathbf{x},\mathbf{a}}
\left\|
f(\mathbf{x},\mathbf{a})
-
D_{\varepsilon_N}\!\left(
\mathbf{P}_{\varepsilon_N}(\mathbf{a})E_{\varepsilon_N}(\mathbf{x})
\right)
\right\|
&\le
(L+1)
\frac{\sqrt{d}}{2}N^{-1/d} =
\frac{\sqrt{d}}{2}(L+1)N^{-1/d}.
\end{align}

Since $d$ and $L$ are fixed constants,
\begin{align}
\sup_{\mathbf{x},\mathbf{a}}
\left\|
f(\mathbf{x},\mathbf{a})
-
D_{\varepsilon_N}\!\left(
\mathbf{P}_{\varepsilon_N}(\mathbf{a})E_{\varepsilon_N}(\mathbf{x})
\right)
\right\|
=
O(N^{-1/d}).
\end{align}
Similarly, the finite-horizon rollout bound from Lemma~\ref{lemma:finiterollout} gives
\begin{align}
\|\mathbf{x}_H-\hat{\mathbf{x}}_H\|
\le
\varepsilon_N
\sum_{k=0}^{H}L^k.
\end{align}
Substituting the covering radius, we have
\begin{align}
\|\mathbf{x}_H-\hat{\mathbf{x}}_H\|
\le
\frac{\sqrt{d}}{2}
N^{-1/d}
\sum_{k=0}^{H}L^k.
\end{align}

Since $H$, $d$, and $L$ are fixed,
it follows that
\begin{align}
\lim_{N\rightarrow\infty}
\|\mathbf{x}_H-\hat{\mathbf{x}}_H\|
=
0.
\end{align}
\end{proof}

\section{Hyperparameter Sweeps}\label{sec:thoroughhyperforpusht}

\subsection{LpWM and LeWM Sweeps in the PushT Environment}\label{subsec:lpwmvslewmsweepinpusht}

To ensure full reproducibility and transparency, we perform and report hyperparameter sweeps underlying Figure~\ref{fig:sparsitylinearityfigsub22}. 

We run a grid search over the RDMReg regularization weight $\lambda_{\operatorname{RDMReg}}\in\{0.01,0.1,1.0,10.0\}$ and the muP (or $\mu$P) learning rate $\ell_{\operatorname{muP}}\in\{0.05,0.005,0.0005,0.00005\}$ for each fixed feature dimension $d\in\{384,768, 1536, 2048, 4096\}$, method (LeWM and LpWM with target distribution $\prod_{i=1}^d\operatorname{ReLU}(\operatorname{Laplace}(\mu=0,\sigma=\sqrt{1/2}))$), and predictor setting in Table~\ref{tab:predictor-ladder}. To ensure fair comparison and standardized hyperparameter sweeps, we use our sliced Wasserstein distance formulation as the distribution-matching loss for isotropic Gaussian targets instead of the Epps-Pulley loss in SIGReg \citep{balestriero2025lejepaprovablescalableselfsupervised}. They are both in fact instantiations of Cramer-Wold theorem, and only differ in one-sample vs. two-sample style distribution matching \citep{kuang2026rectifiedlpjepajointembeddingpredictive}.  

Due to compute constraints, we fix one seed for training but average over $3$ seeds during planning evaluations with Cross-Entropy Methods (CEM) \citep{rubinstein1999cross}. In practice, we execute our sweep sequentially. So there are some predictor types where we have done a very thorough grid search. After we have determined that some hyperparameters are always leading to bad performances (e.g. usually learning rate like $0.05$ is too large), we drop them in our subsequent sweeps so the resulting figures for different predictor types usually cover different subset of the full grid. 

In Figure~\ref{fig:pushtsweepmlpltik}, we observe that $\ell_{\operatorname{muP}}\in\{0.05,0.005\}$ almost always leads to $0\%$ success rate. Subsequent hyperparameter sweeps shown in Figure~\ref{fig:pushtsweepdeepadalnk}, Figure~\ref{fig:pushtsweepshallowadalnk}, Figure~\ref{fig:pushtsweepmlpltvk}, Figure~\ref{fig:pushtsweepltik}, and Figure~\ref{fig:pushtsweeplti1} selectively remove portions of the grid to save compute. 

\subsection{VICReg Sweep in the PushT Environment}\label{subsec:vicregsweepinpusht}

We also perform extensive hyperparameter sweeps for JEPA regularized with VICReg. Basically, we take the LeWM setting and replaced the SIGReg regularizer with VICReg. The resulting model doesn't have a name yet, and we abuse the notation and simply call it VICReg here. 

In Figure~\ref{fig:vicpushtsweepdeepadalnk}, Figure~\ref{fig:vicpushtsweepmlpltvk}, and Figure~\ref{fig:vicpushtsweepmlpltik}, we show the sweep results for the predictors Deep-AdaLN$(k)$, MLP$\circ$LTV$(k)$, and MLP$\circ$LTI$(k)$ respectively.

\begin{figure*}[t!]
    \centering

    \begin{subfigure}[t]{0.95\linewidth}
        \centering
        \includegraphics[width=\linewidth]{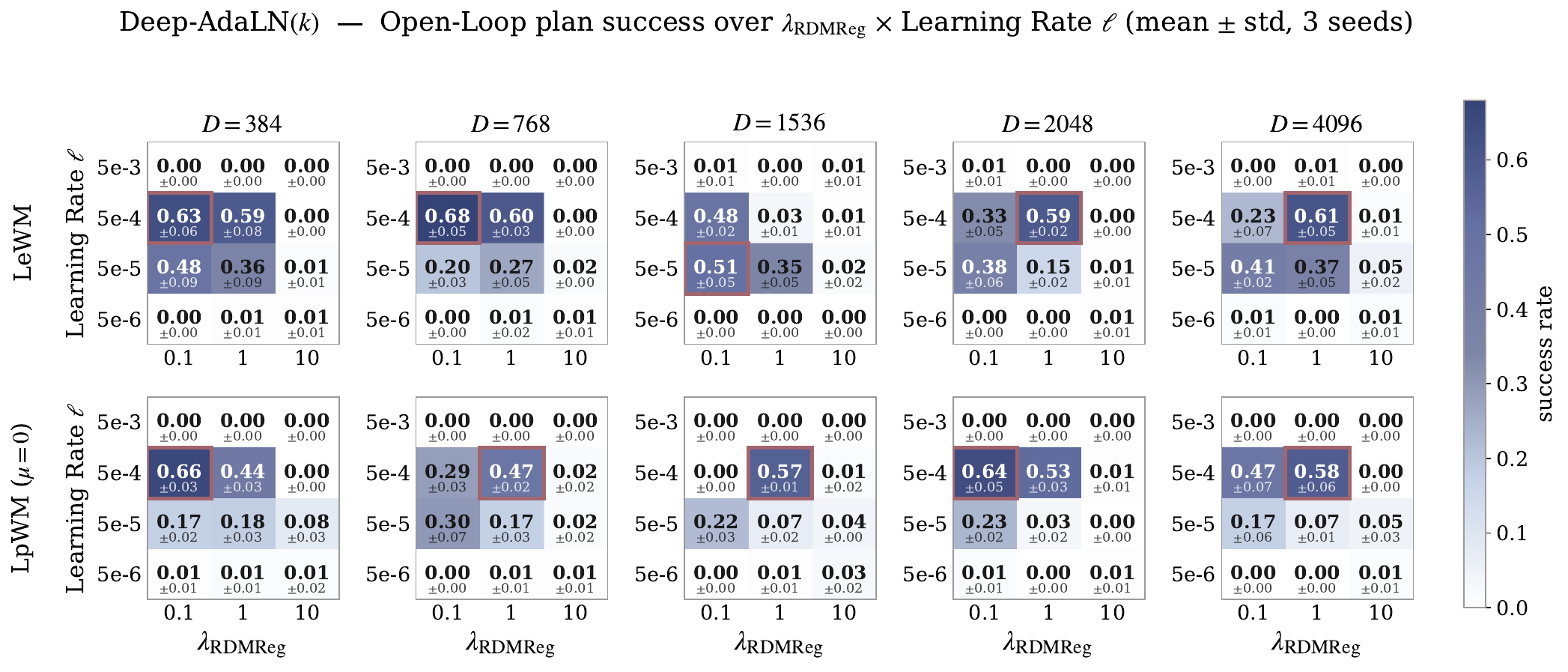}
        \captionsetup{justification=centering}
        \caption{Open-Loop.}
        \label{fig:pushtsweepdeepadalnkopen}
    \end{subfigure}

    \vspace{0.5cm}

    \begin{subfigure}[t]{0.95\linewidth}
        \centering
        \includegraphics[width=\linewidth]{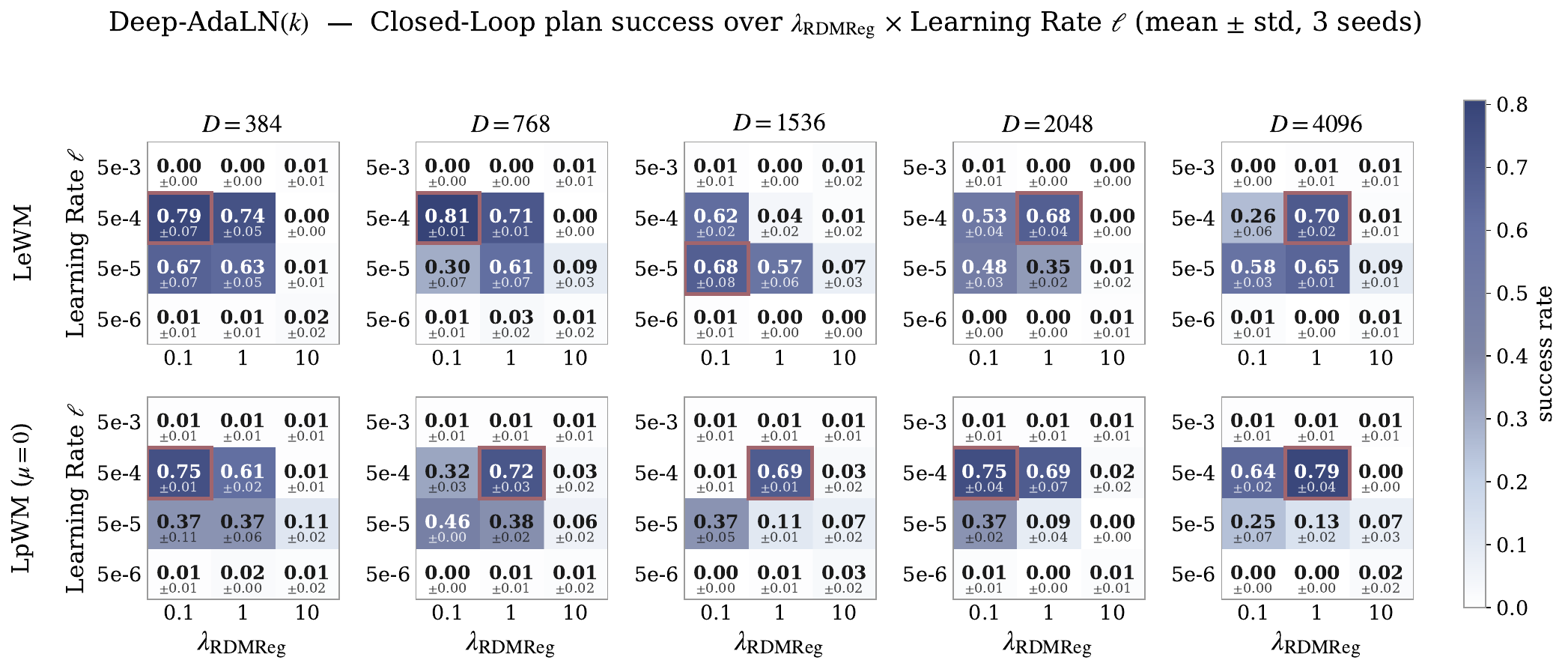}
        \captionsetup{justification=centering}
        \caption{Closed-Loop.}
        \label{fig:pushtsweepdeepadalnkclosed}
    \end{subfigure}

    \caption{\textbf{Deep-AdaLN$(k)$ Predictor.} Hyperparameter Sweeps for the PushT Environment.}
    \label{fig:pushtsweepdeepadalnk}
\end{figure*}

\begin{figure*}[t!]
    \centering

    \begin{subfigure}[t]{0.95\linewidth}
        \centering
        \includegraphics[width=\linewidth]{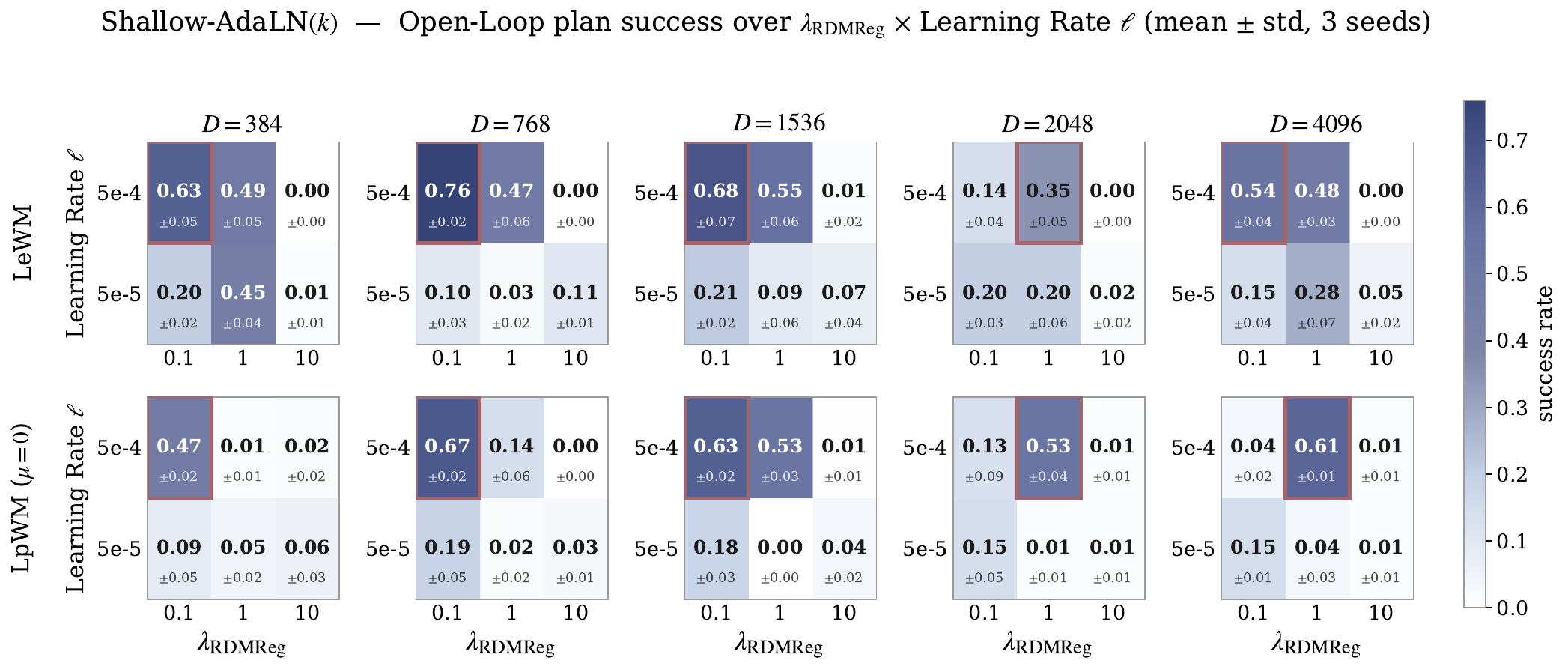}
        \captionsetup{justification=centering}
        \caption{Open-Loop.}
        \label{fig:pushtsweepshallowadalnkopen}
    \end{subfigure}

    \vspace{0.5cm}

    \begin{subfigure}[t]{0.95\linewidth}
        \centering
        \includegraphics[width=\linewidth]{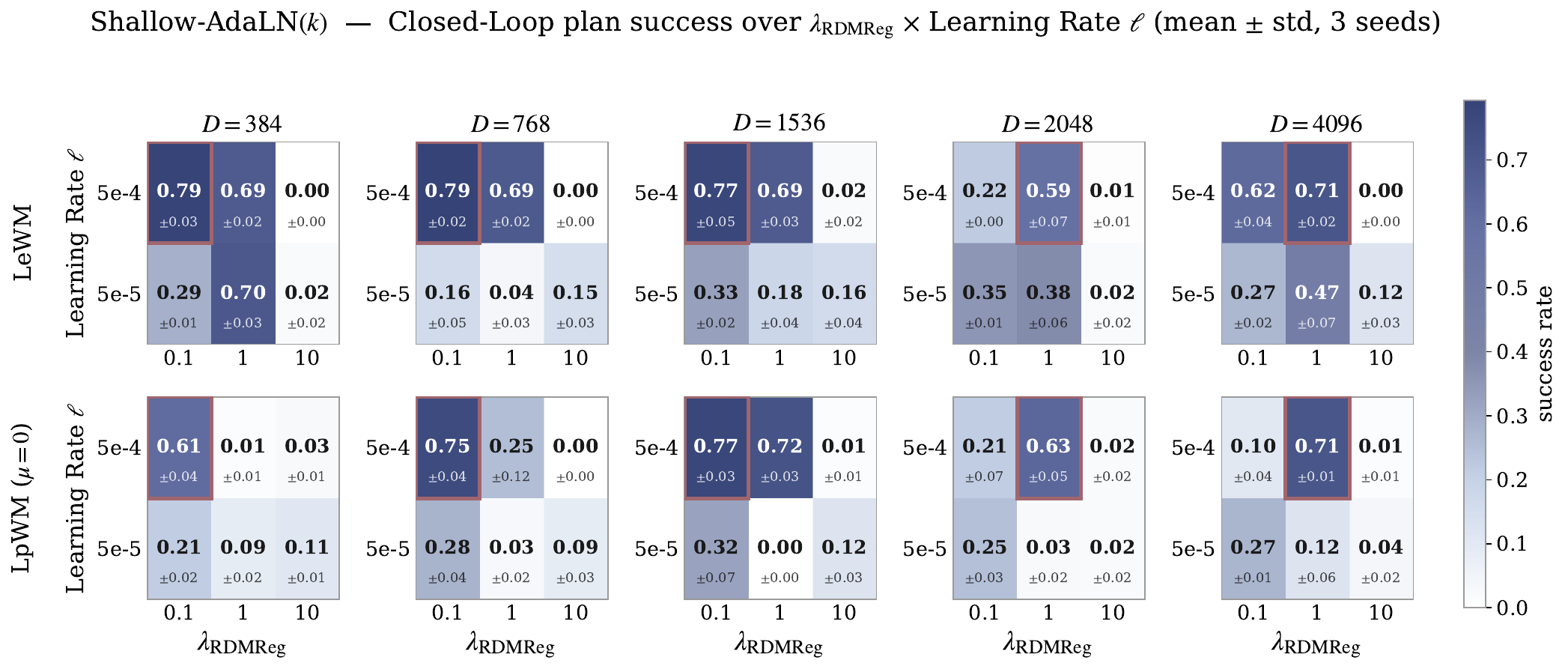}
        \captionsetup{justification=centering}
        \caption{Closed-Loop.}
        \label{fig:pushtsweepshallowadalnkclosed}
    \end{subfigure}

    \caption{\textbf{Shallow-AdaLN$(k)$ Predictor.} Hyperparameter Sweeps for the PushT Environment.}
    \label{fig:pushtsweepshallowadalnk}
\end{figure*}

\begin{figure*}[t!]
    \centering

    \begin{subfigure}[t]{0.95\linewidth}
        \centering
        \includegraphics[width=\linewidth]{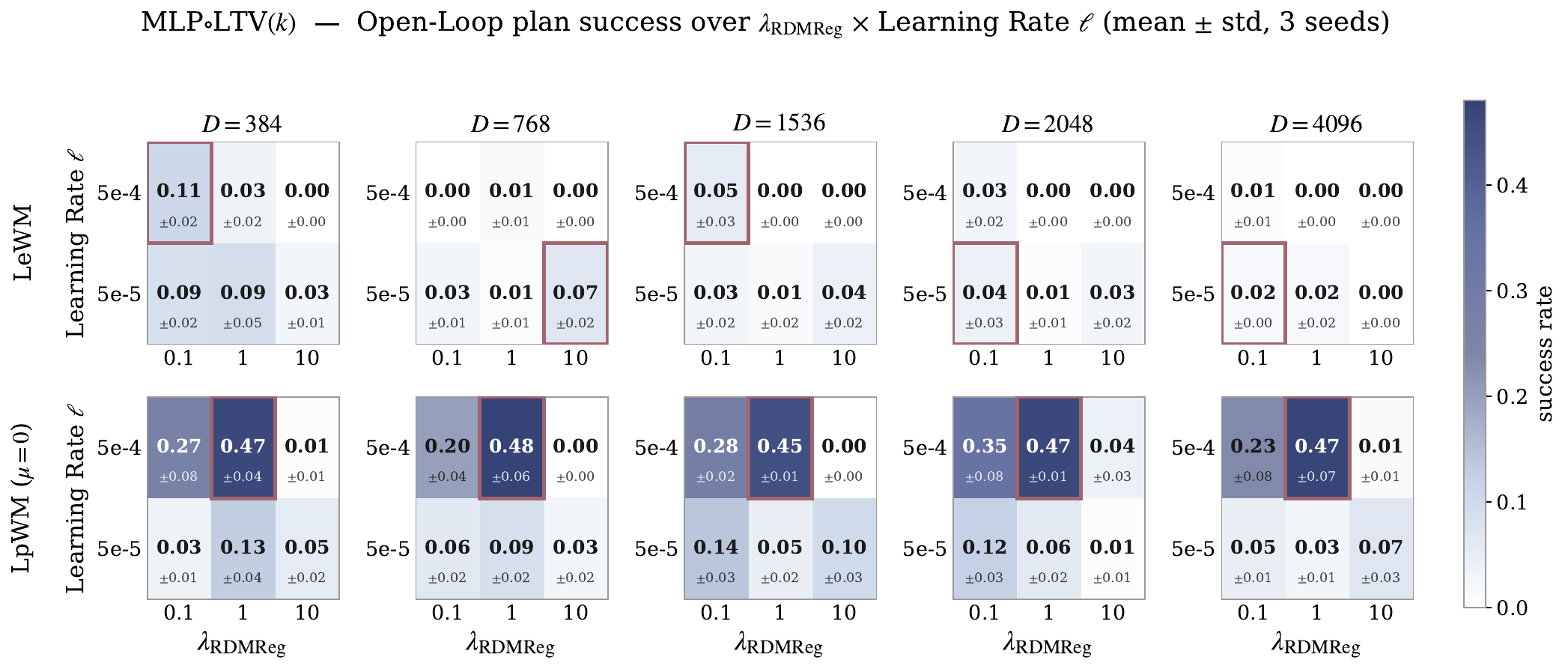}
        \captionsetup{justification=centering}
        \caption{Open-Loop.}
        \label{fig:pushtsweepmlpltvkopen}
    \end{subfigure}

    \vspace{0.5cm}

    \begin{subfigure}[t]{0.95\linewidth}
        \centering
        \includegraphics[width=\linewidth]{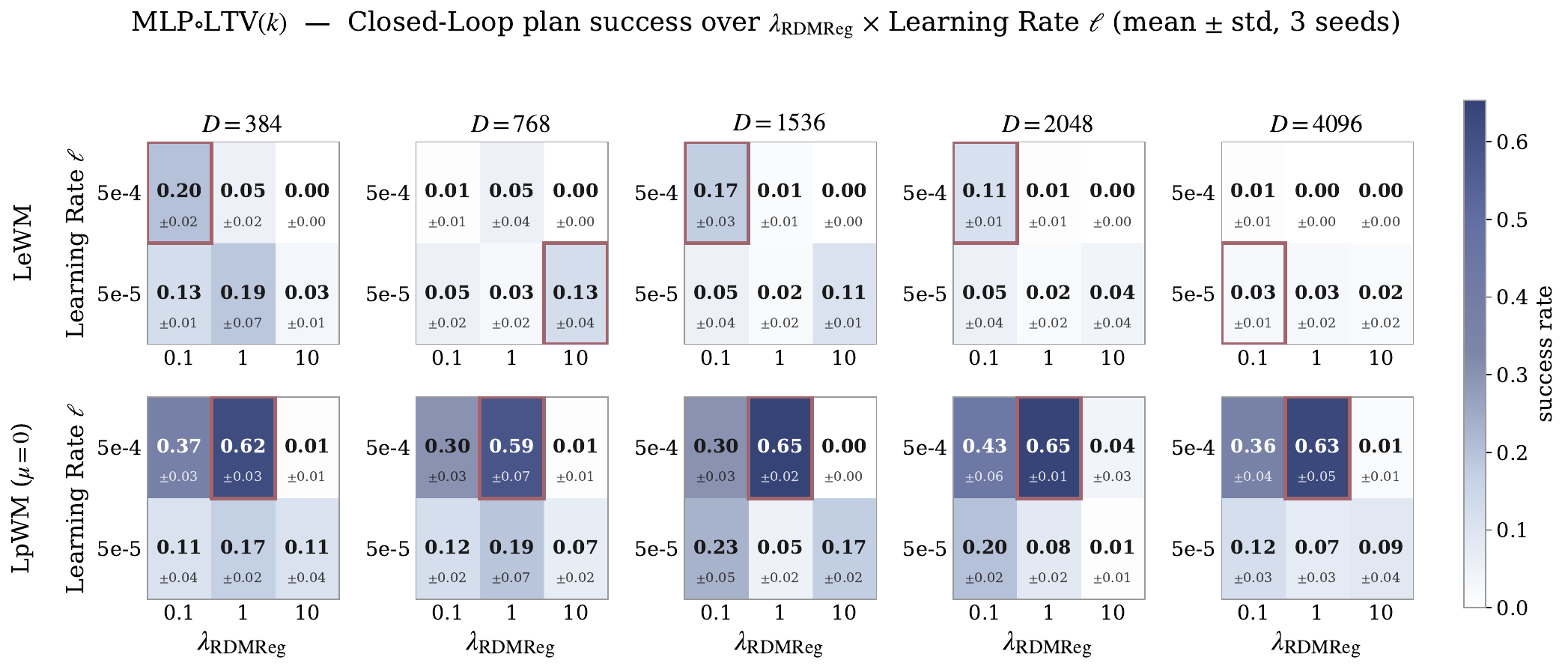}
        \captionsetup{justification=centering}
        \caption{Closed-Loop.}
        \label{fig:pushtsweepmlpltvkclosed}
    \end{subfigure}

    \caption{\textbf{MLP$\circ$LTV$(k)$ Predictor.} Hyperparameter Sweeps for the PushT Environment.}
    \label{fig:pushtsweepmlpltvk}
\end{figure*}

\begin{figure*}[t!]
    \centering

    \begin{subfigure}[t]{0.95\linewidth}
        \centering
        \includegraphics[width=\linewidth]{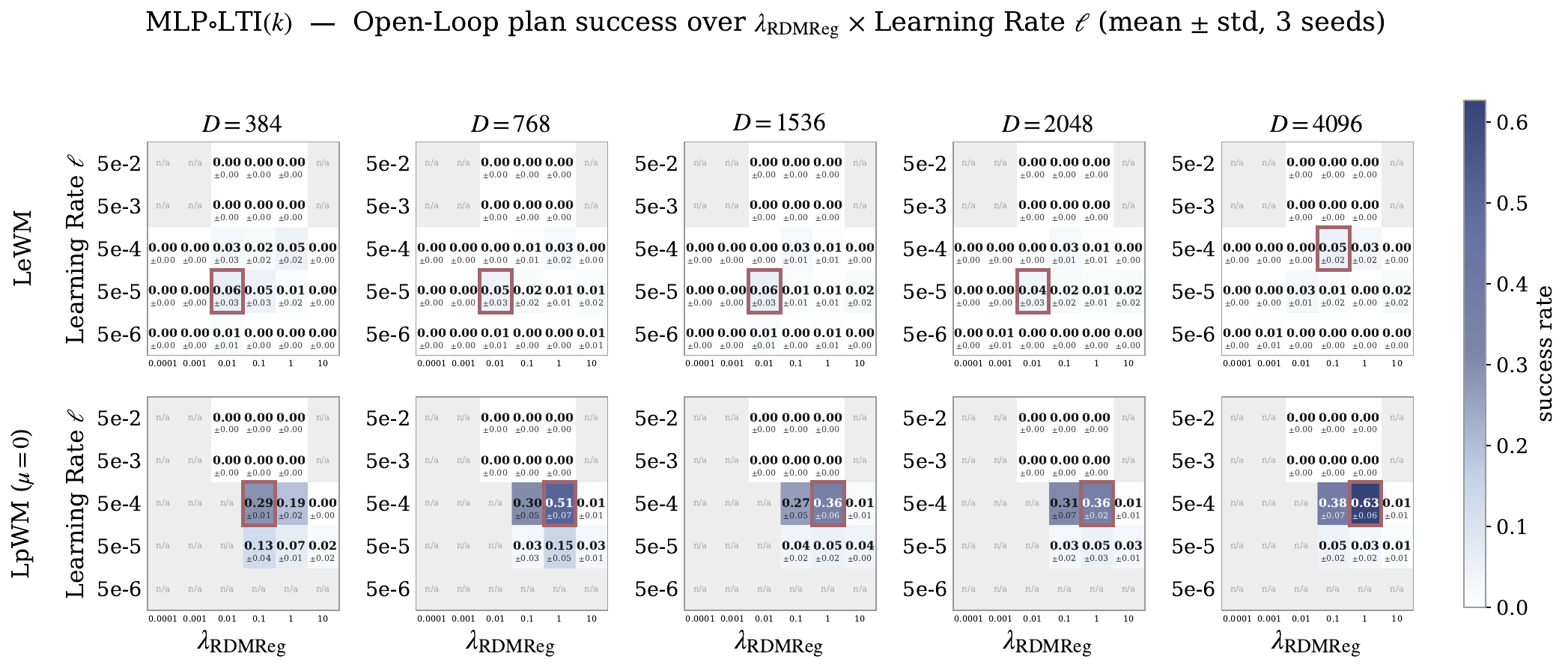}
        \captionsetup{justification=centering}
        \caption{Open-Loop.}
        \label{fig:pushtsweepmlpltikopen}
    \end{subfigure}

    \vspace{0.5cm}

    \begin{subfigure}[t]{0.95\linewidth}
        \centering
        \includegraphics[width=\linewidth]{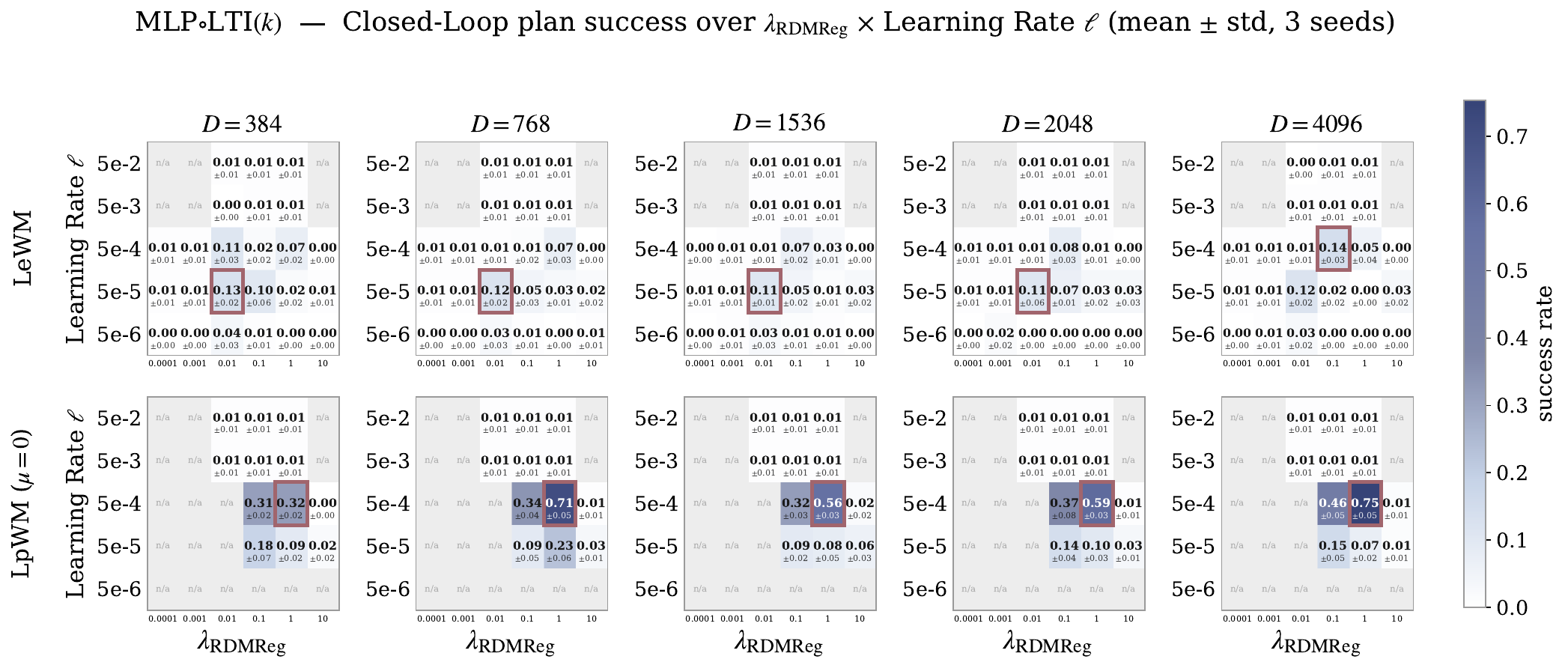}
        \captionsetup{justification=centering}
        \caption{Closed-Loop.}
        \label{fig:pushtsweepmlpltikclosed}
    \end{subfigure}

    \caption{\textbf{MLP$\circ$LTI$(k)$ Predictor.} Hyperparameter Sweeps for the PushT Environment.}
    \label{fig:pushtsweepmlpltik}
\end{figure*}

\begin{figure*}[t!]
    \centering

    \begin{subfigure}[t]{0.95\linewidth}
        \centering
        \includegraphics[width=\linewidth]{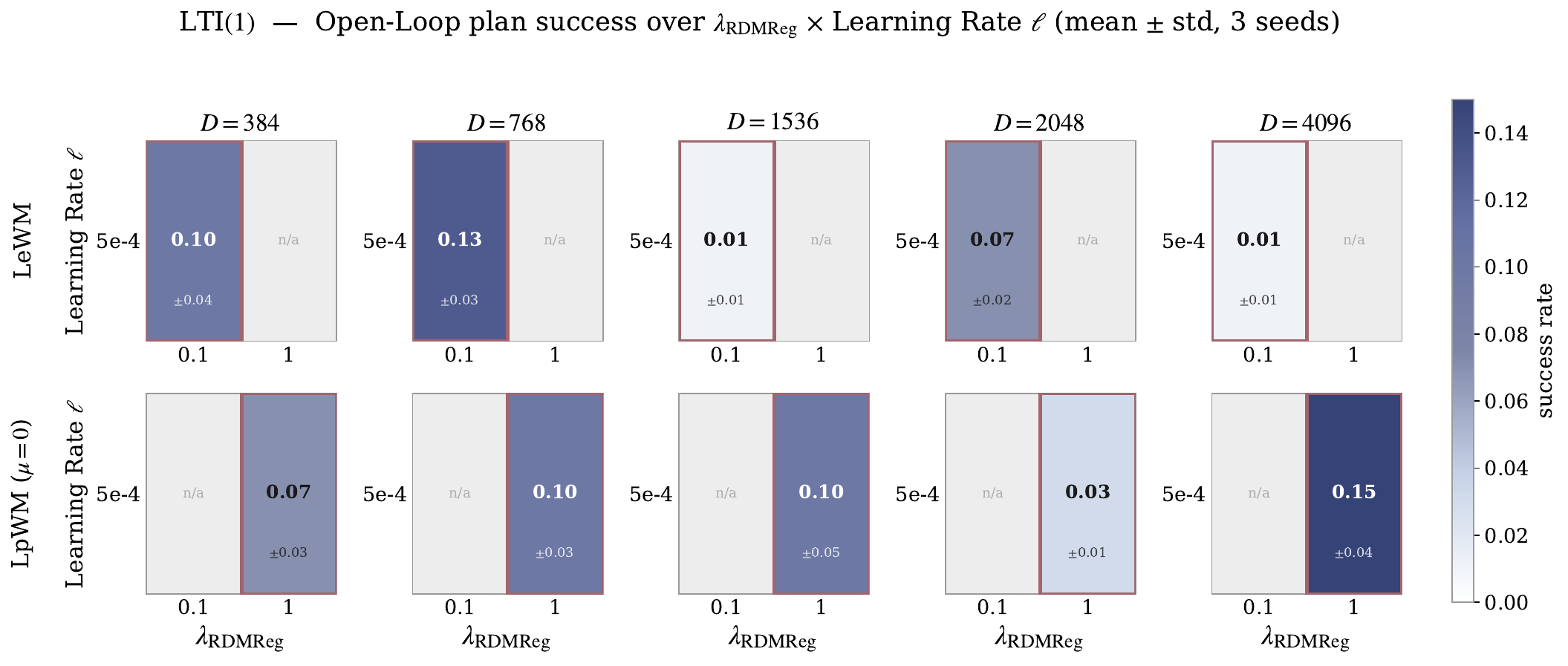}
        \captionsetup{justification=centering}
        \caption{Open-Loop.}
        \label{fig:pushtsweeplti1open}
    \end{subfigure}

    \vspace{0.5cm}

    \begin{subfigure}[t]{0.95\linewidth}
        \centering
        \includegraphics[width=\linewidth]{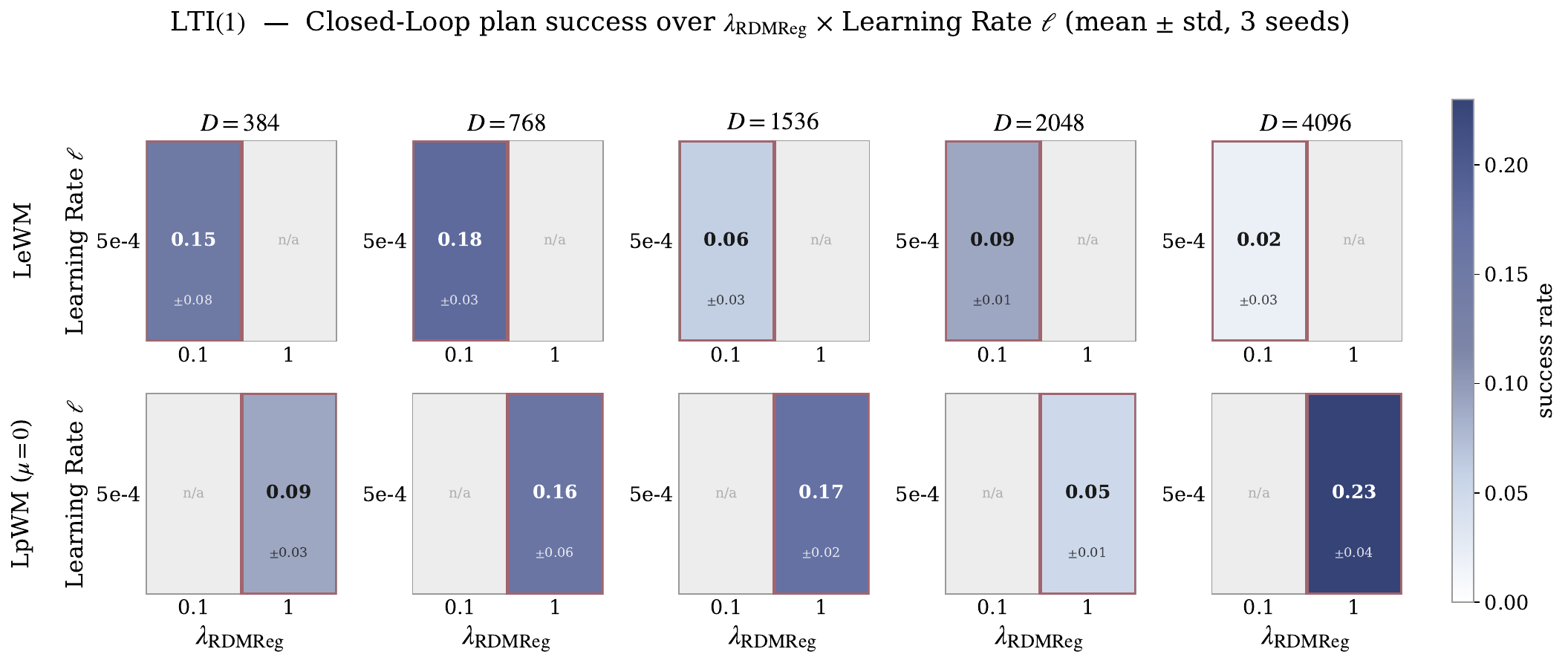}
        \captionsetup{justification=centering}
        \caption{Closed-Loop.}
        \label{fig:pushtsweeplti1closed}
    \end{subfigure}

    \caption{\textbf{LTI$(1)$ Predictor.} Hyperparameter Sweeps for the PushT Environment.}
    \label{fig:pushtsweeplti1}
\end{figure*}

\begin{figure*}[t!]
    \centering

    \begin{subfigure}[t]{0.95\linewidth}
        \centering
        \includegraphics[width=\linewidth]{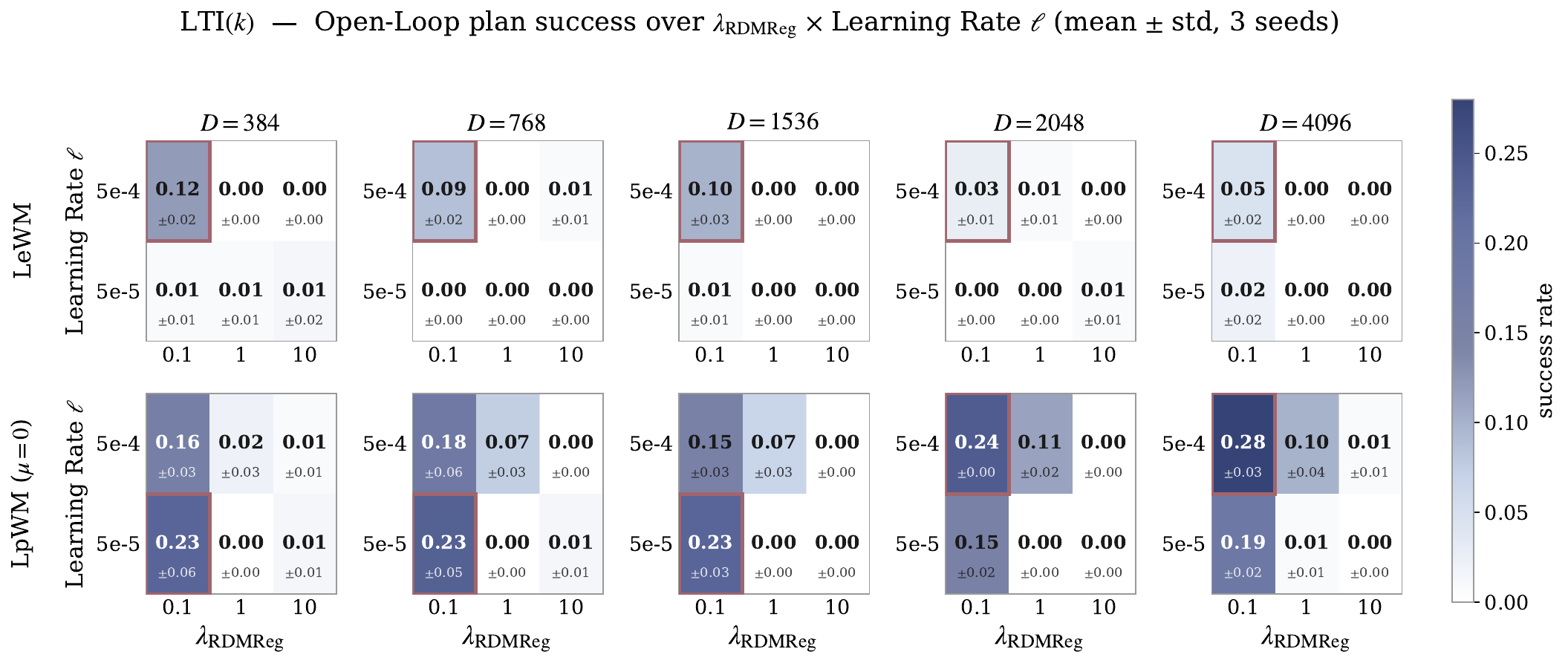}
        \captionsetup{justification=centering}
        \caption{Open-Loop.}
        \label{fig:pushtsweepltikopen}
    \end{subfigure}

    \vspace{0.5cm}

    \begin{subfigure}[t]{0.95\linewidth}
        \centering
        \includegraphics[width=\linewidth]{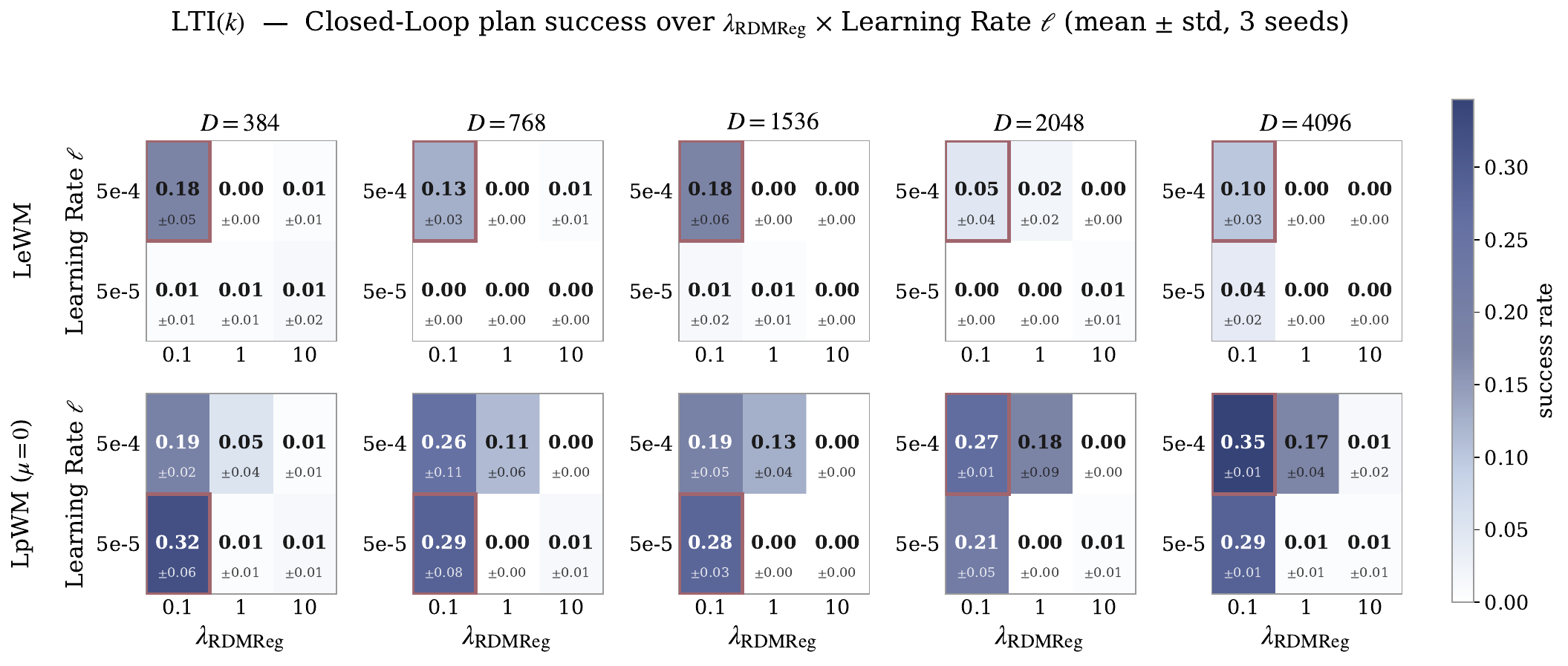}
        \captionsetup{justification=centering}
        \caption{Closed-Loop.}
        \label{fig:pushtsweepltikclosed}
    \end{subfigure}

    \caption{\textbf{LTI$(k)$ Predictor.} Hyperparameter Sweeps for the PushT Environment.}
    \label{fig:pushtsweepltik}
\end{figure*}

\begin{figure*}[t!]
    \centering

    \begin{subfigure}[t]{0.95\linewidth}
        \centering
        \includegraphics[width=\linewidth]{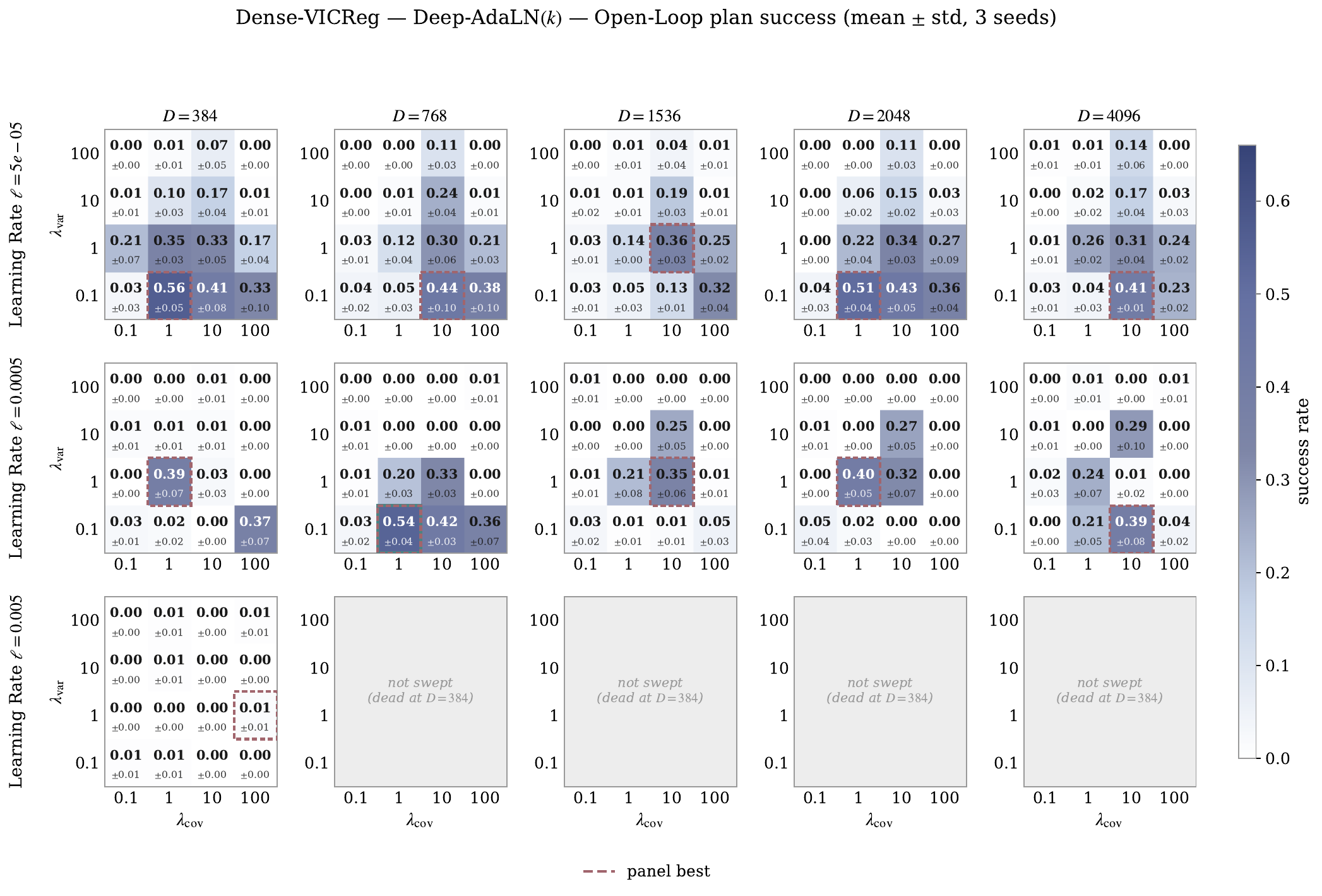}
        \captionsetup{justification=centering}
        \caption{Open-Loop.}
        \label{fig:vicpushtsweepdeepadalnkopen}
    \end{subfigure}

    \vspace{0.5cm}

    \begin{subfigure}[t]{0.95\linewidth}
        \centering
        \includegraphics[width=\linewidth]{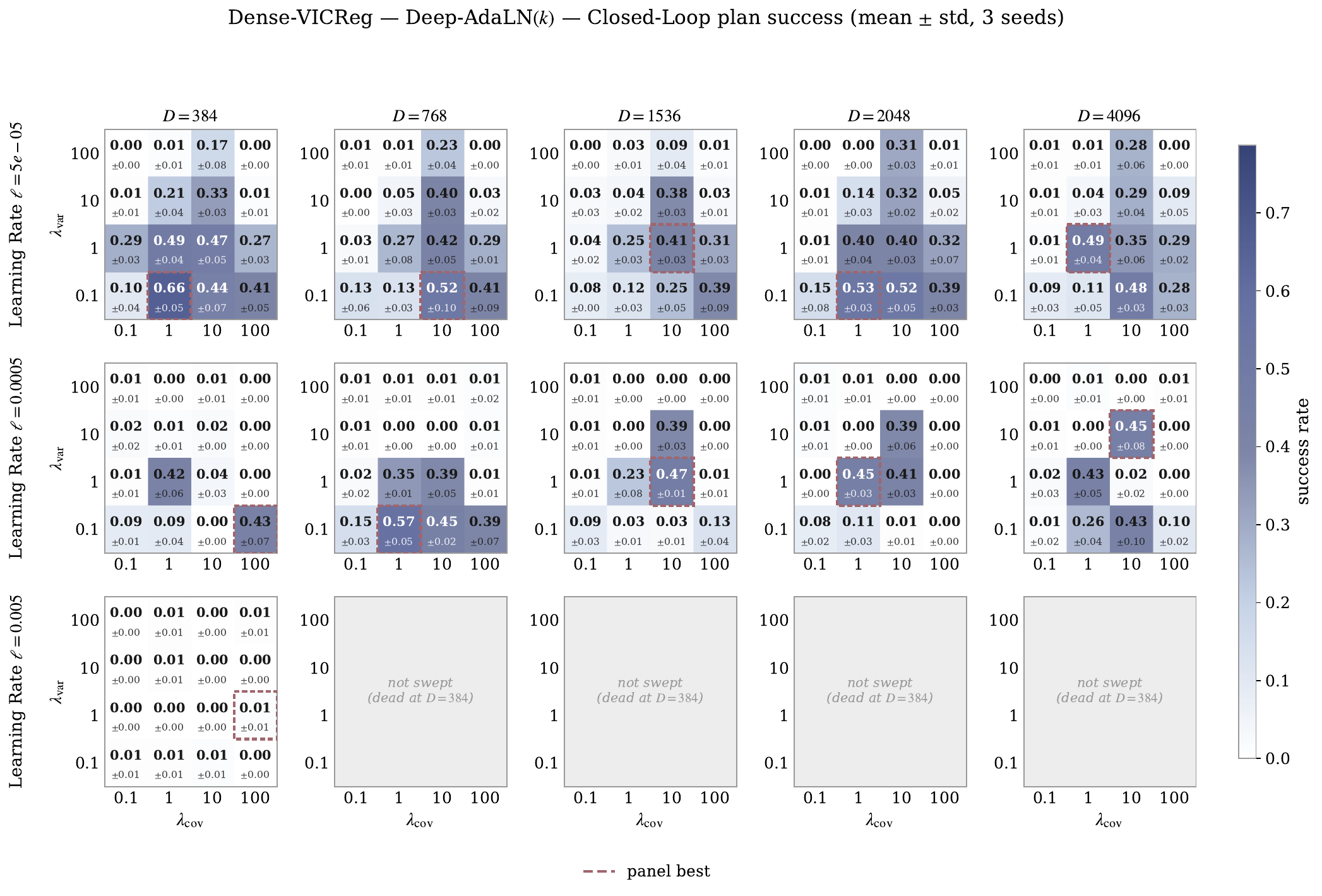}
        \captionsetup{justification=centering}
        \caption{Closed-Loop.}
        \label{fig:vicpushtsweepdeepadalnkclosed}
    \end{subfigure}

    \caption{\textbf{Deep-AdaLN$(k)$ Predictor.} Hyperparameter Sweeps for VICReg-based JEPA models in the PushT Environment.}
    \label{fig:vicpushtsweepdeepadalnk}
\end{figure*}

\begin{figure*}[t!]
    \centering

    \begin{subfigure}[t]{0.95\linewidth}
        \centering
        \includegraphics[width=\linewidth]{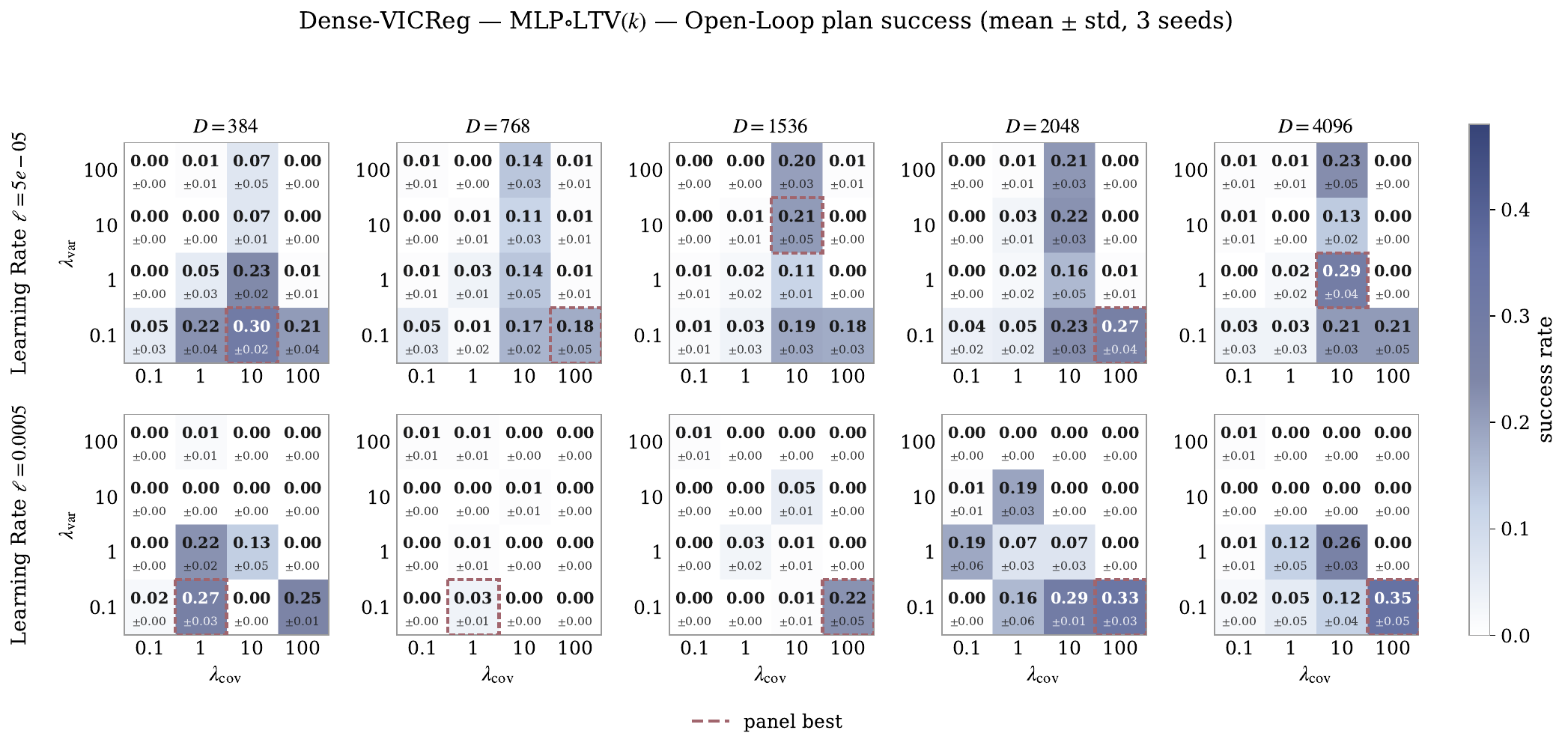}
        \captionsetup{justification=centering}
        \caption{Open-Loop.}
        \label{fig:vicpushtsweepmlpltvkopen}
    \end{subfigure}

    \vspace{0.5cm}

    \begin{subfigure}[t]{0.95\linewidth}
        \centering
        \includegraphics[width=\linewidth]{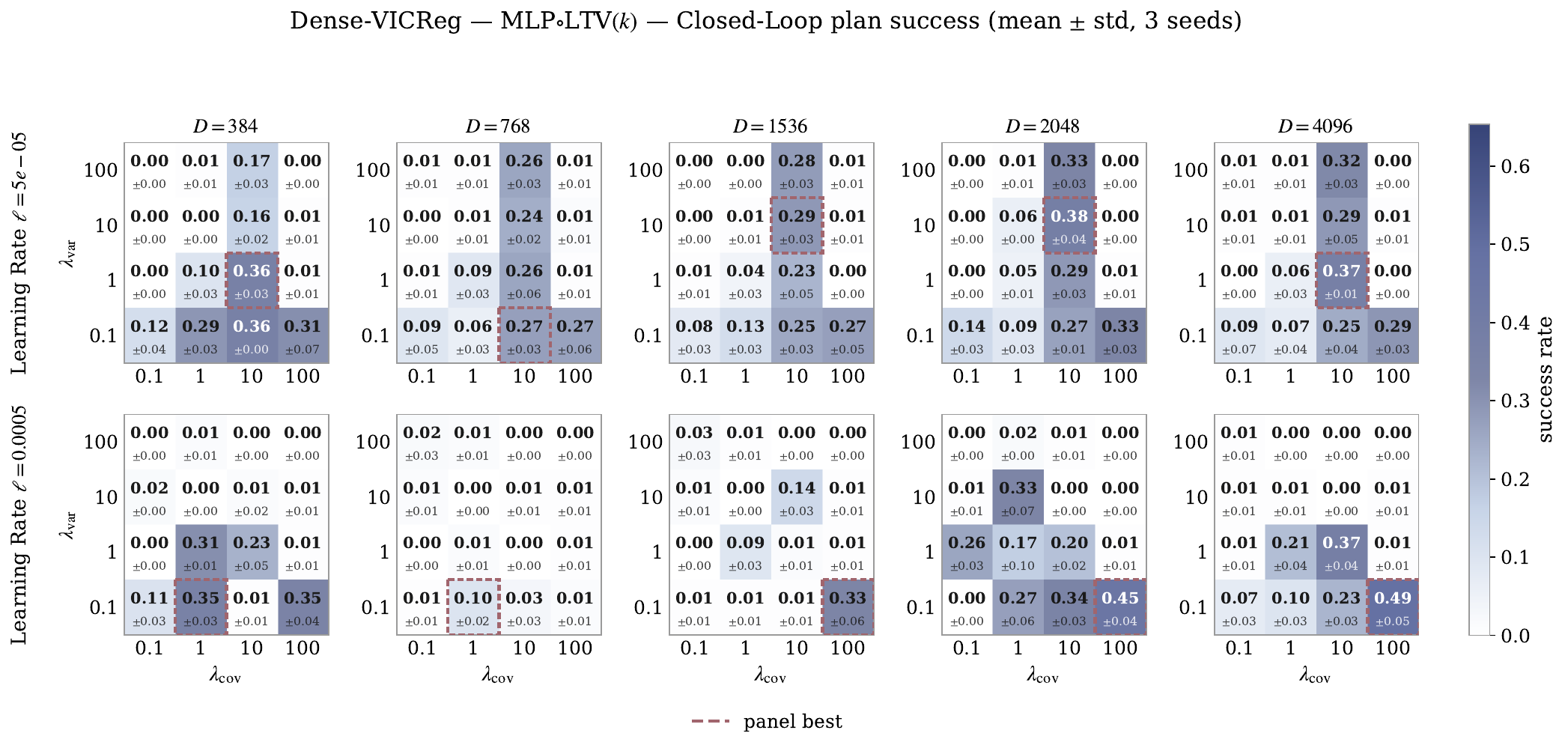}
        \captionsetup{justification=centering}
        \caption{Closed-Loop.}
        \label{fig:vicpushtsweepmlpltvkclosed}
    \end{subfigure}

    \caption{\textbf{MLP$\circ$LTV$(k)$ Predictor.} Hyperparameter Sweeps for VICReg-based JEPA models in the PushT Environment.}
    \label{fig:vicpushtsweepmlpltvk}
\end{figure*}

\begin{figure*}[t!]
    \centering

    \begin{subfigure}[t]{0.95\linewidth}
        \centering
        \includegraphics[width=\linewidth]{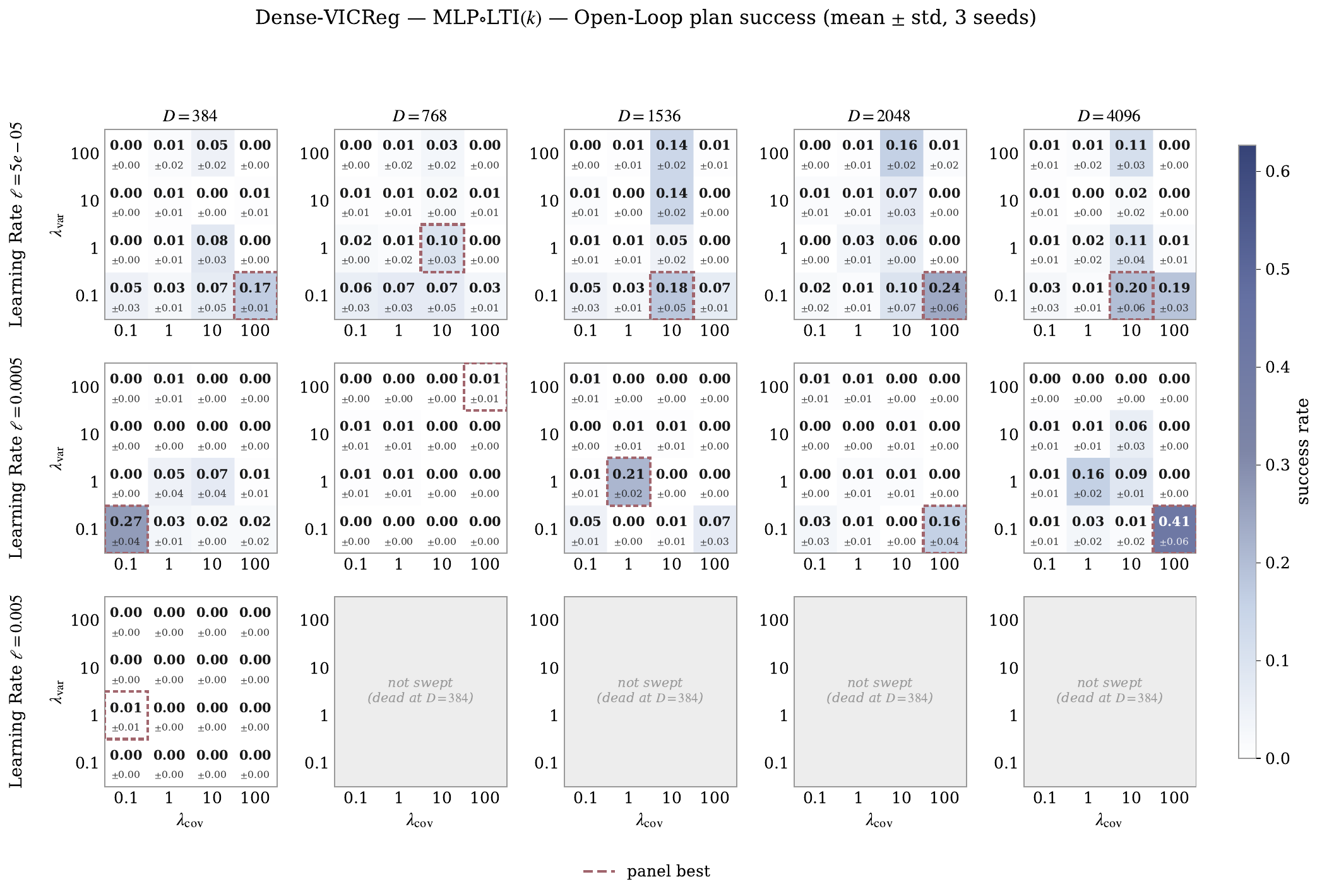}
        \captionsetup{justification=centering}
        \caption{Open-Loop.}
        \label{fig:vicpushtsweepmlpltikopen}
    \end{subfigure}

    \vspace{0.5cm}

    \begin{subfigure}[t]{0.95\linewidth}
        \centering
        \includegraphics[width=\linewidth]{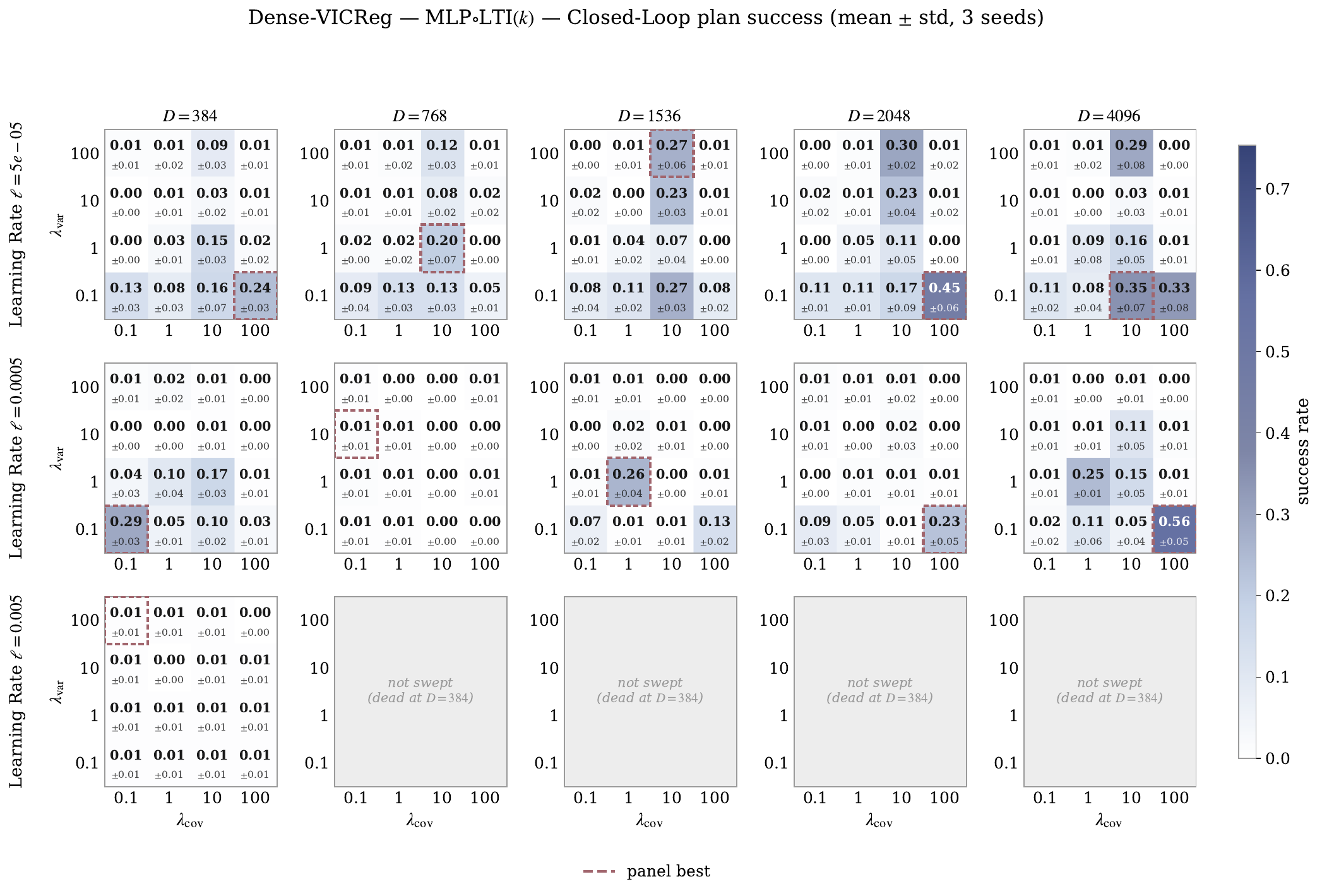}
        \captionsetup{justification=centering}
        \caption{Closed-Loop.}
        \label{fig:vicpushtsweepmlpltikclosed}
    \end{subfigure}

    \caption{\textbf{MLP$\circ$LTI$(k)$ Predictor.} Hyperparameter Sweeps for VICReg-based JEPA models in the PushT Environment.}
    \label{fig:vicpushtsweepmlpltik}
\end{figure*}

\end{document}